%% file: main.tex
\documentclass{article}

\usepackage{microtype}
\usepackage{graphicx}
\usepackage{subcaption}
\usepackage{booktabs} 

\usepackage{hyperref}

\usepackage[accepted]{icml2026}

\usepackage{amsmath}
\usepackage{amssymb}
\usepackage{mathtools}
\usepackage{amsthm}

\usepackage[capitalize,noabbrev]{cleveref}

\usepackage{graphicx, wrapfig} 
\usepackage{caption}
\usepackage{subcaption}
\usepackage{makecell}

\newcommand{\ie}{\textit{i.e., }}
\newcommand{\eg}{\textit{e.g., }}

\input{math_commands.tex}

\usepackage{array}
\usepackage{xcolor}
\usepackage{subcaption}

\definecolor{cStrawberry}{HTML}{f94144}
\definecolor{cPumpkin}{HTML}{f3722c}
\definecolor{cCarrot}{HTML}{f8961e}
\definecolor{cTangerine}{HTML}{f9844a}
\definecolor{cTuscan}{HTML}{de9f08}
\definecolor{cWillow}{HTML}{6d9f47}
\definecolor{cSeaweed}{HTML}{43aa8b}
\definecolor{cDarkCyan}{HTML}{4d908e}
\definecolor{cBlueSlate}{HTML}{577590}
\definecolor{cCerulean}{HTML}{277da1}

\colorlet{cShared}{cWillow}
\colorlet{cSeparated}{cTuscan}
\colorlet{cIsoEnergy}{cSeaweed}
\colorlet{cGroupSparse}{cBlueSlate}
\colorlet{cPosthocAlign}{cPumpkin}

\setlist[itemize]{leftmargin=1em, itemsep=0.5pt, topsep=0.5pt}

\icmltitlerunning{Same Concept, Different Directions: Cross-Modal Feature Heterogeneity in Sparse Autoencoders}

\begin{document}

\twocolumn[
  \icmltitle{Same Concept, Different Directions:\\ Cross-Modal Feature Heterogeneity in Sparse Autoencoders}



  \icmlsetsymbol{equal}{*}

  \begin{icmlauthorlist}
    \icmlauthor{Chungpa Lee}{equal,ys}
    \icmlauthor{Jihoon Kwon}{equal,snu}
    \icmlauthor{Kyle Min}{orc}
    \icmlauthor{Jy-yong Sohn}{ys}
  \end{icmlauthorlist}

  \icmlaffiliation{ys}{Yonsei University, Korea}
  \icmlaffiliation{snu}{Seoul National University, Korea}
  \icmlaffiliation{orc}{Oracle, USA}

  \icmlcorrespondingauthor{Jy-yong Sohn}{jysohn1108@yonsei.ac.kr}

  \icmlkeywords{Machine Learning, ICML}

  \vskip 0.3in
]



\printAffiliationsAndNotice{\icmlEqualContribution}

\begin{abstract}
Vision--language models map images and text into a joint embedding space. However, these embeddings often entangle multiple semantic features, which limits their interpretability and controllability. While sparse autoencoders have emerged as a useful tool for decomposing these embeddings into monosemantic features, their application to joint embedding spaces has largely relied on an implicit, untested assumption that semantically corresponding features share the same directions across modalities. In this paper, we challenge this assumption by identifying discrepancies in feature directions for the same concept across image and text modalities, a phenomenon we term \emph{cross-modal feature heterogeneity}. We demonstrate that this heterogeneity is a key driver of the \emph{modality split}, where a shared concept activates different latents depending on the modality. This finding further reveals why aligning latent activations alone is insufficient to resolve the underlying feature mismatch. Motivated by this observation, we propose an approach that trains modality-specific sparse autoencoders to preserve each modality’s feature geometry, and then aligns corresponding features post hoc. Our method improves reconstruction fidelity and enhances performance in cross-modal retrieval and concept steering.
\vspace{-4pt}
\end{abstract}

\input{texts/body}

\clearpage

\bibliography{__ref}
\bibliographystyle{icml2026}

\clearpage
\appendix
\onecolumn
\include{texts/appendix}


\end{document}

%% file: math_commands.tex
\usepackage{amsmath, amsthm, amssymb, amsfonts,bm}
\usepackage{mathtools}
\mathtoolsset{showonlyrefs}

\usepackage{thmtools}
\usepackage{thm-restate}

\usepackage[table]{xcolor}

\definecolor{myred}{HTML}{FF6B6B}
\definecolor{myyellow}{HTML}{FFD93D}
\definecolor{mygreen}{HTML}{6BCB77}
\definecolor{myblue}{HTML}{4D96FF}

\usepackage{enumitem}
\setlist[enumerate]{leftmargin=*, label=\roman*.} 
\setlist[itemize]{leftmargin=*, itemsep=0pt, topsep=0pt, partopsep=0pt}

\newtheorem{definition}{Definition}

\newtheorem*{theorem*}{Theorem}
\newtheorem*{proposition*}{Proposition}
\newtheorem*{lemma*}{Lemma}
\newtheorem*{corollary*}{Corollary}
\newtheorem*{example*}{Example}

\def\Pr{\mathop{\rm Pr}\nolimits} 

\def\E{\displaystyle \mathbb{E}}

\def\Corr{\displaystyle\mathrm{Corr}}

\newcommand{\loss}{\mathcal{L}}

\newcommand{\norm}[1]{\left\lVert#1\right\rVert}

\DeclareMathOperator*{\argmax}{arg\,max}

\DeclareMathOperator{\tr}{tr}

\def\vzero{{\mathbf{0}}}

\def\vtheta{{\mathbf{\theta}}}

\def\vu{{\mathbf{u}}}
\def\vv{{\mathbf{v}}}
\def\vw{{\mathbf{w}}}
\def\vx{{\mathbf{x}}}
\def\vy{{\mathbf{y}}}
\def\vz{{\mathbf{z}}}

\def\vC{{\mathbf{C}}}

\def\vM{{\mathbf{M}}}

\def\vP{{\mathbf{P}}}

\def\vV{{\mathbf{V}}}
\def\vW{{\mathbf{W}}}

\def\vZ{{\mathbf{Z}}}

\def\vPhi{{\mathbf{\Phi}}}
\def\vPsi{{\mathbf{\Psi}}}

\def\vphi{{\bm{\phi}}}
\def\vpsi{{\bm{\psi}}}

\def\sA{{\mathbb{A}}}

\def\sR{{\mathbb{R}}}
\def\sS{{\mathbb{S}}}

%% file: texts/body.tex
\section{Introduction}
\vspace{-4pt}
Vision-language models (VLMs) map images and text into a joint representation space, which supports a wide range of downstream tasks~\citep{radford21clip, yu2022coca, chen2024understanding} and serves as the foundation for generative VLMs~\citep{liu2023visual, rombach2022high}. Despite their empirical success, the joint representation remains difficult to interpret~\citep{oikarinen2023clipdissect}. These embeddings are typically \emph{polysemantic}, as they encode multiple semantic concepts in a single vector, and it remains unclear how each concept is encoded and shared across modalities.

The linear representation hypothesis~\citep{elhage2022toy, park2024lrh} provides a useful framework for studying this question, positing that each embedding can be expressed as a linear combination of a small number of \emph{monosemantic} features. Sparse autoencoders (SAEs) operationalize this hypothesis by mapping each embedding to a \emph{sparse latent code}, whose active coordinates indicate which monosemantic features are present in the embedding. Building on their success in unimodal settings~\citep{huben2023sparse, templeton2024scaling, gao2025scaling, markssparse}, recent work has applied SAEs to joint embedding spaces to recover monosemantic features shared across image and text modalities~\citep{costa2026from, zaigrajew2025interpreting, zhang2025large, papadimitriou2025interpreting, pach2026sparse, kaushik2026learning, dhimoila2026crossmodal}.

When SAEs are applied to joint embeddings of VLMs, prior work~\citep{papadimitriou2025interpreting} observes a phenomenon known as \emph{modality split}, where the same concept activates different latent coordinates across modalities. This split makes latent codes difficult to use across modalities, motivating prior approaches to align latent activations across modalities~\citep{kaushik2026learning, dhimoila2026crossmodal}. However, these approaches treat modality split as a mismatch in latent activations, while implicitly assuming that semantically corresponding features share the same directions across modalities.

\input{figures/problem}

In this paper, we revisit this assumption by asking whether semantically corresponding feature directions are indeed aligned across modalities in the learned embedding space. We characterize \emph{cross-modal feature heterogeneity}, a phenomenon in which the same semantic concept can be represented by different feature directions across image and text modalities, as illustrated in Figure~\ref{fig:problem}.

Accounting for this heterogeneity is crucial as it redefines our interpretation of the modality split. When corresponding image and text features have different directions in the joint embedding space, an SAE naturally assigns them to different latent coordinates to ensure precise reconstruction. Consequently, while forcing these latent activations to align across different modalities might mitigate the modality split, it risks degrading feature recovery by collapsing geometrically distinct directions into a single coordinate. This motivates our approach, which trains modality-specific SAEs to preserve each modality’s feature geometry and then aligns corresponding features post hoc.

Our contributions are summarized below.
\begin{itemize}
    \item In Section~\ref{sec:heterogeneity:definition}, we characterize \emph{cross-modal feature heterogeneity}, where the same concept can have different feature directions across modalities in the joint embedding space.
    \item In Section~\ref{sec:theory}, we show that this heterogeneity can explain \emph{modality split} in multimodal SAEs and analyze how existing alignment approaches trade reconstruction quality for latent alignment.
    \item In Section~\ref{sec:ours}, we propose an approach that preserves the unique feature geometry of each modality and aligns corresponding latent coordinates post hoc without sacrificing reconstruction quality.
    \item In Section~\ref{sec:experiments}, we show that our approach improves reconstruction fidelity and cross-modal retrieval, and enables more effective concept steering on real VLM embeddings.
\end{itemize}

\section{Related Work}
\label{sec:related:work}

\paragraph{Mechanistic interpretability and linear representation hypothesis.}
Mechanistic interpretability aims to reverse-engineer neural networks, often by interpreting their learned representations as human-understandable mechanisms~\citep{conmy2023towards, elhage2021mathematical, wang2023interpretability, chughtai2023toy}. A key assumption underlying this line of work is the \emph{linear representation hypothesis}~\citep{elhage2022toy, park2024lrh}, which posits that each semantically coherent concept is encoded as a unique direction in representation space, and that embeddings can be expressed as linear combinations of such directions. This hypothesis is supported by diverse empirical evidence from linear probing~\citep{alain2017understanding, hewitt2019structural} and steering interventions~\citep{li2023inference, arditi2024refusal, rimsky2024steering}. Recent work has begun extending this hypothesis to multimodal representations~\citep{patashnik2021style, brack2023sega, liu2025reducing}, opening avenues for understanding feature structure across modalities.

\paragraph{Sparse autoencoders for interpreting representations.}
SAEs have shown great promise for extracting monosemantic features from embeddings of language models~\citep{huben2023sparse, templeton2024scaling, gao2025scaling, markssparse, harle2025monosemanticity}. Building on these advances, SAEs have been further applied to VLMs~\citep{rao2024discover, zaigrajew2025interpreting, pach2026sparse}. Recent work has further examined how these features align or separate across modalities~\citep{costa2026from, zaigrajew2025interpreting, zhang2025large, papadimitriou2025interpreting, pach2026sparse}, motivating an analysis of their underlying structural properties in joint embedding spaces.

\paragraph{Joint representations of vision-language models.}
A prominent phenomenon in joint VLM representations is the \emph{modality gap}, where image and text embeddings occupy disjoint regions of the shared space~\citep{liang2022mind}.
Prior work has analyzed this gap from multiple geometric, distributional, and optimization-level perspectives~\citep{liang2022mind, zhang2024connect, schrodi2025two}.
One line of work argues that the modality gap partly reflects modality-specific features in multimodal representations, rather than merely a failure of alignment~\citep{jiang2023understanding, ramasinghe2024accept, qian2023intra}.
Another line focuses on cross-modal misalignment of shared features and studies how reducing such mismatch improves downstream task performance~\citep{eslami2025mitigate, yamaguchi2025post, grassucci2026closing, mistretta2025cross}.

Recent SAE-based analyses further reveal \emph{modality split}~\citep{papadimitriou2025interpreting}, where the same concept activates different latent codes across modalities. Existing remedies force shared latent activations~\citep{kaushik2026learning, dhimoila2026crossmodal}, implicitly assuming feature directions are aligned (\ie $\vphi_i=\vpsi_i$) and overlooking directional misalignment. When shared concepts are encoded along distinct directions, forcing a single shared SAE inevitably degrades reconstruction. We instead preserve modality-specific directions by using modality-specific SAEs and align the corresponding features post hoc.

\section{Preliminaries}
\label{sec:prelim}
We formalize joint embeddings as linear combinations of monosemantic features. We then introduce SAEs as a framework for recovering feature directions from embeddings.

\paragraph{Embeddings as linear combinations of features.}
Following the linear representation hypothesis~\citep{arora2018linear, elhage2022toy, park2024lrh, cui2026on}, we assume that VLM embeddings $\vx, \vy \in \mathbb{R}^d$ are polysemantic, meaning that each embedding can be expressed as a linear combination of monosemantic features. A monosemantic feature is a direction in the joint embedding space that represents a semantically coherent concept.

Formally, let $n$ be the number of latent concepts represented in the joint embedding space, and define $[n] := \{1,2,\cdots,n\}$. We assume the existence of a latent code $\vz := (z_1\; \cdots\; z_n)^\top \in \mathbb{R}_+^n$ and \emph{feature matrices} $\vPhi := (\vphi_1\; \cdots\; \vphi_n) \in \mathbb{R}^{d \times n}$ for images and $\vPsi := (\vpsi_1\; \cdots\; \vpsi_n) \in \mathbb{R}^{d \times n}$ for text. The column vectors $\vphi_i, \vpsi_i \in \mathbb{R}^d$ denote the \emph{feature direction vectors} of the $i$-th latent concept for images and text, respectively. We assume that all feature directions have unit norm.

Under the linear representation hypothesis, an image embedding is written as $\vx = \sum_{i\in[n]} z_i\vphi_i$, where each activation value $z_i \in \sR_{+}$ indicates the strength of the corresponding feature, whereas $z_i = 0$ indicates that it is inactive. When an image--text pair is considered, we assume that the two embeddings share the same latent code $\vz$, while their feature directions may differ across modalities:
\begin{align}
    \label{eq:embeddings}
    \vx = \vPhi \vz ={\textstyle \sum_{i\in[n]}} z_i \vphi_i,
    \quad
    \vy = \vPsi \vz ={\textstyle \sum_{i\in[n]}} z_i \vpsi_i.
\end{align}
Unlike prior work~\citep{kaushik2026learning}, we impose no constraint such as $\vphi_i = \vpsi_i$. Thus, we allow $\vphi_i \neq \vpsi_i$, capturing the possibility that the same concept is encoded along different directions across modalities.

We assume that the coordinates of the latent code $\vz$ are independent and sparse. Specifically, the sparsity parameter $s$ is defined as the probability that each coordinate is inactive,
\begin{align}
    \label{eq:sparsity}
    \Pr(z_i = 0) = s, \qquad i \in [n],
\end{align}
where $s$ is close to $1$. This implies that only a small number of features are active in each embedding.

\paragraph{Sparse autoencoder.}
SAEs encode each embedding into a sparse latent code and decode this code to reconstruct the original embedding~\citep{huben2023sparse, templeton2024scaling, gao2025scaling, rao2024discover, zaigrajew2025interpreting, cui2026on}. Ideally, each active coordinate of the latent code represents the strength of a monosemantic feature in the embedding, while the corresponding decoder column provides its feature direction.

Specifically, let $m$ be the latent dimension of the SAE, and let $\vW:=[\vw_1 \cdots \vw_m] \in \mathbb{R}^{d \times m}$ denote its weight matrix. For simplicity, we omit bias terms and use the same weight matrix for the encoder and decoder. The encoder produces an \emph{estimated latent code} $\tilde{\vz}:=(\tilde{z}_1\; \cdots\; \tilde{z}_m)^\top \in \mathbb{R}_+^m$. For image and text embeddings, we write
\begin{align}
    \label{eq:encoder}
    \tilde{\vz}(\vx) := \sigma(\vW^\top \vx), \qquad
    \tilde{\vz}(\vy) := \sigma(\vW^\top \vy),
\end{align}
where $\sigma$ denotes an activation function\footnote{For theory, we use the Top-1 operator, whose behavior can be approximated by Top-$K$ activations with bias terms. In experiments, we use SAEs with Top-$K$ activations for $K>1$, bias terms, and separate encoder and decoder weights.} that induces sparsity.
The decoder then produces
\begin{align}
    \tilde{\vx}(\vx) := \vW \tilde{\vz}(\vx)
    ={\textstyle \sum_{j\in[m]}} \tilde{z}_j(\vx)\; \vw_j,
    \\
    \tilde{\vy}(\vy) := \vW \tilde{\vz}(\vy)
    ={\textstyle \sum_{j\in[m]}} \tilde{z}_j(\vy)\; \vw_j.
    \label{eq:decoder}
\end{align}

The SAE is trained to reconstruct embeddings from both modalities by minimizing
\begin{align}
    \label{eq:loss}
    &\loss_{\mathrm{rec}}(\vW; \vPhi) + \loss_{\mathrm{rec}}(\vW; \vPsi)
    \\
    &\qquad
    :=
    \mathbb{E}_{\vx}\!\norm{\vx-\tilde{\vx}(\vx)}_2^2
    + \mathbb{E}_{\vy}\!\norm{\vy-\tilde{\vy}(\vy)}_2^2,
\end{align}
where the first term denotes the reconstruction loss for image embeddings generated by $\vPhi$, and the second term denotes the reconstruction loss for text embeddings generated by $\vPsi$, as specified in~\eqref{eq:embeddings}.
More explicitly, $\loss_{\mathrm{rec}}(\vW; \vPhi) :=
 \mathbb{E}_{\vx}\!\norm{\vx-\tilde{\vx}(\vx)}_2^2
 = \mathbb{E}_{\vz}\!\norm{\vPhi \vz - \vW \sigma\!\big( \vW^\top\vPhi\vz \big)}_2^2$.
After training the SAE, the encoder extracts latent codes $\tilde{\vz}$, and each decoder column $\vw_j$ provides an estimate of a feature direction~\citep{cui2026on}.

\paragraph{Concept alignment across modalities in sparse autoencoders.}
When an SAE is trained on embeddings from both modalities, it is expected to \emph{recover monosemantic features} within each modality and to \emph{align shared concepts} by assigning corresponding image and text features to shared latent coordinates. However, each latent index $j$ is tied to a single column $\vw_j$ of the decoder weight matrix in \eqref{eq:decoder}. Assigning corresponding features to the same coordinate implicitly assumes that they share a common direction in the joint embedding space. We first examine whether this assumption holds in learned joint embedding spaces.

\input{figures/multi_density}
\input{figures/qualitative_heterogeneity}

\section{Cross-Modal Feature Heterogeneity \hspace{32pt} in Joint Embedding Spaces}
\label{sec:heterogeneity:definition}
We examine whether semantically corresponding features share a common direction across image and text modalities in the joint embedding space. When they do not, we refer to this directional mismatch as \emph{cross-modal feature heterogeneity}, which we formally define below.
\vspace{-2pt}
\begin{definition}[Cross-Modal Feature Heterogeneity]
\label{def:heterogeneity}
The $i$-th latent concept exhibits cross-modal feature heterogeneity if its image and text feature directions are not perfectly aligned, \ie $\vphi_i \neq \vpsi_i$.
\end{definition}
\vspace{-2pt}
This definition separates semantic correspondence from directional identity in the joint embedding space. An image feature and a text feature may represent the same concept while occupying different directions in the joint embedding space. We next examine whether such directional differences appear in joint embedding spaces learned by VLMs.

\paragraph{Measuring differences in feature directions across modalities.}
Since feature directions are unobserved, we use decoder columns as their estimates~\citep{cui2026on}, letting $\hat{\vphi}_i$ and $\hat{\vpsi}_j$ denote the $i$-th and $j$-th decoder columns of SAEs trained on image and text embeddings, respectively.

The correspondences between image and text features are also unknown. We therefore use coactivation correlations on paired image and text embeddings as a proxy for semantic correspondence. Using the encoders of the SAEs trained on image and text embeddings, we compute latent codes $\tilde{\vz}(\vx)$ and $\tilde{\vz}(\vy)$ for paired embeddings $(\vx,\vy)$ and form their correlation matrix over the training set:
\vspace{-2pt}
\begin{align}
    \label{eq:correlation}
    \vC := \Corr\big(\tilde{\vz}(\vx), \tilde{\vz}(\vy)\big) \in \mathbb{R}^{m \times m},
    \qquad c_{i,j} := [\vC]_{i,j}.
\end{align}
Each entry $c_{i,j}$ measures how strongly the $i$-th image latent and the $j$-th text latent coactivate on paired inputs. Larger values are treated as indicating more likely semantic correspondence. To test whether such likely corresponding features are also directionally aligned in the joint embedding space, we measure the distance between their estimated feature directions. Specifically, for each pair $(i,j)$, we compute the cosine distance between $\hat{\vphi}_i$ and $\hat{\vpsi}_j$, defined as $d_{\cos}(\hat{\vphi}_i, \hat{\vpsi}_j) := 1 - \cos(\hat{\vphi}_i, \hat{\vpsi}_j) \in [0,2]$.

Figure~\ref{fig:multi_density} reports the distribution of cosine distances $d_{\cos}$ between estimated image and text feature directions, grouped by coactivation correlation $c$ in~\eqref{eq:correlation}. The results are computed on \texttt{MS-COCO}~\citep{lin2014microsoft} embeddings extracted from four VLMs, including \texttt{CLIP}~\citep{radford21clip}, \texttt{MetaCLIP}~\citep{chuang2026meta}, \texttt{OpenCLIP}~\citep{cherti2023reproducible}, and \texttt{SigLIP2}~\citep{tschannen2025siglip}. Feature pairs with larger coactivation correlation tend to have smaller distances, but they do not concentrate near zero distance. This provides empirical support for cross-modal feature heterogeneity, where semantically corresponding image and text features need not share identical directions in the joint embedding space. See Appendix~\ref{sec:exp:detail:sec3} for details.

Figure~\ref{fig:qualitative_heterogeneity} shows examples where matched image and text latents strongly coactivate and respond to the same concept, such as \emph{baseball}. In this case, the image latent with index $i=1789$ and the text latent with index $j=1654$ have a high coactivation correlation, $c_{i,j}=0.67$. But the distance between the estimated features remains large, with $d_{\cos}(\hat{\vphi}_i, \hat{\vpsi}_j) = 0.42$. These examples show that semantic correspondence need not imply directional identity.

Appendix~\ref{app:heterogeneity} discusses possible sources of this heterogeneity and relates it to the modality gap, showing that a nontrivial modality gap implies cross-modal feature heterogeneity.

\section{Analysis of Sparse Autoencoders \hspace{32pt} under Cross-Modal Feature Heterogeneity}
\label{sec:theory}
\vspace{-4pt}
We analyze whether an SAE can reconstruct image and text feature directions while assigning corresponding concepts to shared latent coordinates. Under cross-modal feature heterogeneity, these two goals conflict. Corollary~\ref{thm:m-large} explains why reconstruction alone induces modality split, Theorem~\ref{thm:m-small} shows how limited latent capacity collapses distinct feature directions, and Propositions~\ref{thm:prev:group} and~\ref{thm:prev:align} show why existing alignment losses reduce modality split only by sacrificing reconstruction quality. Proofs are provided in Appendix~\ref{sec:proofs}.
\vspace{-4pt}

\paragraph{Why reconstruction induces modality split.}
We consider an SAE trained with the reconstruction loss in~\eqref{eq:loss}. Since no alignment term is used, the objective only encourages accurate reconstruction.
\vspace{-2pt}
\begin{restatable}{corollary}{thmlarge}\label{thm:m-large}
    Suppose the SAE in~\eqref{eq:encoder} and~\eqref{eq:decoder} has $m \geq 2n$ latent coordinates. Consider the loss
    $\loss_{\mathrm{rec}}(\vW; \vPhi)+\loss_{\mathrm{rec}}(\vW; \vPsi)$ in~\eqref{eq:loss}.
    As the sparsity parameter $s$ in~\eqref{eq:sparsity} approaches $1$, the loss is minimized to leading order by taking
    $\hat{\vW} := [\vPhi \;\; \vPsi \;\; \vzero_{d \times (m-2n)}]\,\vP$ for any permutation matrix $\vP \in \sR^{m \times m}$, and $\loss_{\mathrm{rec}}(\hat{\vW}; \vPhi) \!+\! \loss_{\mathrm{rec}}(\hat{\vW}; \vPsi) = o(1-s)$.
\end{restatable}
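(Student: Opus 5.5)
We would prove this by expanding the loss in powers of $1-s$, using the independence of the coordinates of $\vz$ in~\eqref{eq:sparsity}. Write $p := 1-s$. The event $\vz=\vzero$ has probability $s^n = 1-np+O(p^2)$ and contributes zero loss, because $\sigma(\vzero)=\vzero$ for the Top-1 operator. The event that exactly one coordinate $z_i$ is active has probability $p\,s^{n-1} = p+O(p^2)$. Events with two or more active coordinates have total probability $O(p^2)=o(p)$. Assuming the conditional distribution of the nonzero activations has finite second moment, and since the reconstruction error is bounded by a constant (depending on $\vW$) times $\norm{\vz}_2^2$, these events contribute only $o(1-s)$. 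Hence
\begin{align}
\loss_{\mathrm{rec}}(\vW;\vPhi)+\loss_{\mathrm{rec}}(\vW;\vPsi) = p\sum_{i\in[n]} \mathbb{E}\big[z_i^2 \mid z_i>0\big]\,\big(e(\vW;\vphi_i)+e(\vW;\vpsi_i)\big) + o(p),
\end{align}
where $e(\vW;\vu) := \norm{\vu - \vW\sigma(\vW^\top \vu)}_2^2$ is the single-feature error. Homogeneity of Top-1 lets us pull out $z_i^2$: for $t>0$, $\sigma(\vW^\top t\vu) = t\,\sigma(\vW^\top \vu)$.

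The leading-order problem is therefore to minimize the weighted sum of the single-feature errors $e(\vW;\vu)$ over the $2n$ unit vectors $\vu\in\{\vphi_i\}\cup\{\vpsi_i\}$. Each term is nonnegative, so any $\vW$ making all of them zero is a leading-order minimizer. For Top-1, $\sigma(\vW^\top\vu)$ keeps only the largest entry $\vw_j^\top\vu$ (clipped at zero), so $\vW\sigma(\vW^\top\vu) = (\vw_{j^*}^\top\vu)\,\vw_{j^*}$ with $j^*$ the argmax.

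We then verify $\hat{\vW}$. Take $\vu=\vphi_i$. The column $\vw_j=\vphi_i$ attains $\vw_j^\top\vu=1$. Every other column is either a unit vector, with inner product at most $1$, or zero. So the maximum equals $1$, and the reconstruction is exactly $\vphi_i$, giving zero error. The same holds for $\vpsi_i$. The permutation $\vP$ only relabels latents and does not change the loss.

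Hence the leading coefficient vanishes, the loss of $\hat{\vW}$ is $o(1-s)$, and no $\vW$ can do better to order $p$ because the coefficient is nonnegative. The two-latent structure then shows the modality split: $\vphi_i$ and $\vpsi_i$ occupy distinct coordinates whenever $\vphi_i\neq\vpsi_i$.

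The main obstacle is tie-breaking in the Top-1 operator. The argmax may be attained by a different column $\vw_k\ne\vu$ with $\vw_k^\top\vu=1$. Since $\vw_k$ is a unit vector, that forces $\vw_k=\vu$, so the reconstruction is identical and ties are harmless. If some $\vphi_i=\vpsi_j$ coincide exactly, the duplicate columns are equal and the same argument applies.

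We would also make the $o(p)$ remainder rigorous, uniformly over the candidate $\vW$. For the fixed $\hat{\vW}$, this follows from $\norm{\hat{\vW}\sigma(\hat{\vW}^\top\vx)}_2\le \norm{\hat{\vW}}^2\norm{\vx}_2$ together with the finite-moment assumption.
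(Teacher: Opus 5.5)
Your proposal is correct and follows essentially the same route as the paper. Both arguments expand the loss to first order in $1-s$, so that only the single-active-coordinate events survive. Both then reduce the problem to the per-direction errors $\|\vphi_i - \vW\sigma(\vW^\top\vphi_i)\|_2^2$ and $\|\vpsi_i - \vW\sigma(\vW^\top\vpsi_i)\|_2^2$, weighted by $\mathbb{E}[z_i^2\mid z_i\neq 0]$, and check that $\hat{\vW}$ reconstructs every $\vphi_i$ and $\vpsi_i$ exactly.

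Your extra care fills in details the paper leaves implicit: positive homogeneity of Top-1, tie-breaking among unit columns, and the bound on the multi-active remainder. Uniformity of the remainder over $\vW$ is not actually needed, since the loss is nonnegative and $\hat{\vW}$ already attains $o(1-s)$.
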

\vspace{-2pt}
Corollary~\ref{thm:m-large} shows that when the SAE has enough latent coordinates, the reconstruction loss is minimized to leading order by assigning a separate column of the decoder weight matrix $\vW$ to each feature direction. Thus, even if the image feature $\vphi_i$ and the text feature $\vpsi_i$ represent the same concept, the reconstruction objective does not force them to share a latent coordinate when they point in different directions. Instead, the SAE can reconstruct them by assigning them to different columns, say $\hat{\vw}_a$ and $\hat{\vw}_b$, which activate different latent coordinates, $\tilde{z}_a$ and $\tilde{z}_b$. This provides a mechanism for modality split: the same semantic concept appears at different latent indices across modalities because accurate reconstruction favors preserving distinct feature directions.
\vspace{-2pt}

\paragraph{Why limited capacity collapses distinct features.}
Corollary~\ref{thm:m-large} assumes enough latent coordinates to represent all feature directions separately, namely $m \geq 2n$. In practice, the effective number of coordinates can be smaller due to dead neurons~\citep{templeton2024scaling, gao2025scaling, dylingrelu2020}, \ie $\tilde{z}_j(\vx)=0$ for all $\vx$. We therefore analyze the regime where the SAE has fewer latent coordinates than feature directions, $m < 2n$.
\vspace{-2pt}
\begin{restatable}{theorem}{thmsmall}
    \label{thm:m-small}
    Suppose the SAE in~\eqref{eq:encoder} and~\eqref{eq:decoder} has $m < 2n$ latent coordinates. Consider the loss
    $\loss_{\mathrm{rec}}(\vW; \vPhi)+\loss_{\mathrm{rec}}(\vW; \vPsi)$ in~\eqref{eq:loss}.
    Define $\vM_i := \E\big[z_i^2 \mid z_i \neq 0\big]\,\vphi_i\vphi_i^\top \in \sR^{d \times d}$
    and $\vM_{n+i} := \E\big[z_i^2 \mid z_i \neq 0\big]\,\vpsi_i\vpsi_i^\top \in \sR^{d \times d}$ for $i \in [n]$.
    Let $(\sA_1, \cdots, \sA_m)$ be a partition of $[2n]$ that maximizes
    $\sum_{j \in [m]} \lambda_{\max}\!\left(\sum_{i \in \sA_j} \vM_i\right)$,
    where $\lambda_{\max}(\cdot)$ denotes the largest eigenvalue.
    As the sparsity parameter $s$ in~\eqref{eq:sparsity} approaches $1$, the loss is minimized to leading order by taking $\hat{\vW}:=[\hat{\vw}_1\;\cdots\;\hat{\vw}_m]$, where each $\hat{\vw}_j$ is a unit-norm top eigenvector of $\sum_{i\in\sA_j}\vM_i$.
    Moreover, if no two directions among $\{\vphi_i\}_{i\in[n]}\cup\{\vpsi_i\}_{i\in[n]}$ are collinear, $\loss_{\mathrm{rec}}(\hat{\vW};\vPhi) \!+\! \loss_{\mathrm{rec}}(\hat{\vW};\vPsi) >0$ for $s$ sufficiently close to $1$.
\end{restatable}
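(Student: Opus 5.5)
The plan is to expand the loss to first order in $1-s$. As $s\to1$, the probability that a code $\vz$ has two or more active coordinates is $O((1-s)^2)$, while the probability that exactly coordinate $i$ is active is $(1-s)s^{n-1}=(1-s)+O((1-s)^2)$. Since the all-zero code contributes zero loss, the expansion reads
\begin{align}
\loss_{\mathrm{rec}}(\vW;\vPhi)+\loss_{\mathrm{rec}}(\vW;\vPsi)
= (1-s)\sum_{i\in[n]} \Big( \ell_i(\vW) + \ell_{n+i}(\vW) \Big) + O((1-s)^2),
\end{align}
where $\ell_i(\vW):=\E[\norm{z_i\vphi_i - \vW\sigma(\vW^\top z_i\vphi_i)}_2^2\mid z_i\neq0]$ and $\ell_{n+i}$ is the analogous term with $\vpsi_i$. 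To justify the remainder we need the multi-active terms to be uniformly bounded. This follows from bounded second moments of $\vz$ once $\vW$ is restricted to a bounded set, which the leading-order analysis forces anyway. Minimizing to leading order therefore means minimizing $\sum_{k\in[2n]}\ell_k(\vW)$. This is the same reduction that underlies Corollary~\ref{thm:m-large}. There the value $0$ is attained with $m\ge 2n$ by letting each direction own a column.

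Next we evaluate each single-feature term under the Top-1 activation. Write $\vu_k$ for the $k$-th direction ($\vphi_k$ or $\vpsi_{k-n}$) and $z>0$ for its activation. Top-1 selects $j^\ast=\arg\max_j \vw_j^\top\vu_k$, which does not depend on $z$, and reconstructs $z(\vw_{j^\ast}^\top\vu_k)\vw_{j^\ast}$. The per-sample error is $z^2\norm{\vu_k-(\vw^\top\vu_k)\vw}^2$. For fixed $\vw$ this is minimized over the norm of $\vw$ when $\vw$ has unit norm, giving $z^2(1-(\vw^\top\vu_k)^2)$. Taking conditional expectations yields $\ell_k = \tr(\vM_k) - \vw_{j^\ast}^\top \vM_k \vw_{j^\ast}$, using that $\norm{\vu_k}=1$.

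Every $\vW$ induces the assignment $\sA_j=\{k: j^\ast(k)=j\}$, which is a partition of $[2n]$ (some parts may be empty). For unit columns the total loss is therefore $\sum_k \tr \vM_k - \sum_j \vw_j^\top(\sum_{k\in\sA_j}\vM_k)\vw_j$. We relax by letting the assignment be chosen freely, $\min_{\vW}\sum_k \ell_k \ge \sum_k\tr\vM_k - \max_{(\sA_j)}\max_{\vw_j}\sum_j \vw_j^\top(\sum_{k\in\sA_j}\vM_k)\vw_j$, where $\vW$ ranges over unit columns and, by the norm step, non-unit columns only do worse. For a fixed partition, the inner maximum is $\sum_j\lambda_{\max}(\sum_{k\in\sA_j}\vM_k)$, attained at top eigenvectors $\hat{\vw}_j$. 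This gives the stated optimum.

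The main obstacle is showing that the relaxation is tight. With $\hat{\vw}_j$ equal to top eigenvectors for an optimal partition, Top-1 must route each $\vu_k$ to the column of its own part, or to a column that does at least as well. The fix is an exchange argument. Let $\sA'_j$ be the partition actually induced by $\hat{\vW}$. Routing to the argmax column satisfies $(\hat{\vw}_{j^\ast}^\top\vu_k)^2\ge(\hat{\vw}_{j}^\top\vu_k)^2$ whenever $\hat{\vw}_j^\top\vu_k\ge0$, and we choose eigenvector signs to make these inner products nonnegative. This step needs care, because $\vw^\top\vu$ can be negative while $\sigma$ is a Top-1 operator on $\sR_+$. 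Then the actual gain $\sum_k (\hat{\vw}_{j^\ast(k)}^\top\vu_k)^2 E_k$, with $E_k=\E[z^2\mid z\neq0]$, is at least the partition value. The relaxation bound shows it cannot exceed that value, so equality holds and $\hat{\vW}$ is a leading-order minimizer.

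For strict positivity, note that $m<2n$ forces some part to contain two directions $\vu_a,\vu_b$ that are not collinear. A single unit $\vw$ cannot satisfy $(\vw^\top\vu_a)^2=(\vw^\top\vu_b)^2=1$. The leading coefficient is therefore some $c>0$, and the loss is $c(1-s)+O((1-s)^2)>0$ for $s$ near $1$.
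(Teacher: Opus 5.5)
Your outline matches the paper's proof. You expand to first order in $1-s$ and let Top-1 routing induce a partition $(\sA_j)$ of $[2n]$. You then maximize each part's Rayleigh quotient to get $\lambda_{\max}\bigl(\sum_{k\in\sA_j}\vM_k\bigr)$. Strict positivity follows because $m<2n$ forces two non-collinear directions into one part; the paper phrases this as $\lambda_{\max}<\tr$ for a PSD sum of rank at least two.

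You also supply two steps the paper skips. One is the reduction to unit-norm columns. The other is the exchange argument showing that the actual routing of $\hat{\vW}$ does at least as well as the partition it was built from. The paper passes directly from ``minimizing is equivalent to maximizing over partitions'' to ``top eigenvectors yield a global minimizer'', and your relaxation-plus-exchange is exactly what that passage needs. Your lower bound and your positivity argument are sound for every $\vW$.

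The step that fails is the sign choice. A top eigenvector of $\sum_{k\in\sA_j}\vM_k$ has a single global sign, but you need $\hat{\vw}_j^\top\vu_k\ge 0$ simultaneously for every $k\in\sA_j$. This is impossible once a part contains two directions at an obtuse angle.

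Concretely, take $n=m=1$, $\rho:=\vphi^\top\vpsi\in(-1,0)$ and $\mu:=\E[z^2\mid z\neq 0]$. The only partition gives $\lambda_{\max}=\mu(1-\rho)$, with top eigenvector $\pm(\vphi-\vpsi)/\norm{\vphi-\vpsi}$. Its inner products with $\vphi$ and $\vpsi$ are $\sqrt{(1-\rho)/2}$ and $-\sqrt{(1-\rho)/2}$, up to a common sign flip. With codes in $\sR_+$, as the paper's setup specifies, one direction gets code $0$. The leading coefficient at $\hat{\vW}$ is then $\mu(3+\rho)/2$, whereas $\vw=\vphi$ achieves $\mu<\mu(3+\rho)/2$. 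Moreover, the partition bound $\mu(1+\rho)$ is attained by no $\vW$.

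So no argument can close this step without an extra hypothesis. The paper's proof has the same hole, hidden in its identity $\norm{\vtheta_i-\vW\sigma(\vW^\top\vtheta_i)}_2^2=\norm{\vtheta_i-\hat{\vw}_i\hat{\vw}_i^\top\vtheta_i}_2^2$, which presumes the selected inner product is nonnegative.

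A sufficient repair is to assume that the directions within each part of the optimal partition have pairwise nonnegative inner products; for instance, assume this for all of $\{\vphi_i\}\cup\{\vpsi_i\}$. Write $\sum_{k\in\sA_j}\vM_k=BB^\top$, where $B$ has columns $\sqrt{\mu_k}\,\vu_k$ and $\mu_k$ is the weight in $\vM_k$. Then $B^\top B$ is entrywise nonnegative, so Perron--Frobenius gives an entrywise nonnegative top eigenvector $a$. The vector $\hat{\vw}_j:=Ba/\norm{Ba}$ is a top eigenvector of $BB^\top$ with $B^\top\hat{\vw}_j=\lambda_{\max}a/\norm{Ba}\ge 0$, that is, $\hat{\vw}_j^\top\vu_k\ge 0$ for all $k\in\sA_j$. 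With this top eigenvector (suitably chosen if $\lambda_{\max}$ is repeated), your exchange argument goes through verbatim.
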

\vspace{-2pt}
Theorem~\ref{thm:m-small} analyzes the regime $m<2n$, where the SAE has fewer latent coordinates than the total number of image and text feature directions. In this regime, some distinct feature directions must share a latent coordinate. The partition $(\sA_1,\ldots,\sA_m)$ describes this assignment over the set of all image and text feature directions. More precisely, all feature directions whose indices belong to the same set $\sA_j$ are represented by the same column $\hat{\vw}_j$ and therefore activate the same coordinate $\tilde{z}_j$.

For each group $\sA_j$, the column $\hat{\vw}_j$ is chosen as the best single direction for representing the directions in that group, namely the unit-norm top eigenvector of $\sum_{i\in\sA_j}\vM_i$, where each $\vM_i$ is the weighted outer product of the corresponding feature direction. Thus, the partition is determined by which feature directions can be well represented by a common decoder column. With equal weights $\E[z_i^2\mid z_i\neq 0]$, feature directions that are close in angle tend to be grouped together.

Consequently, two feature directions that are close in the embedding space may be represented by the same column $\hat{\vw}_j$ and activate the same latent coordinate $\tilde{z}_j$, even if they correspond to different concepts. In a favorable case, if $\vphi_i$ and $\vpsi_i$ represent the same concept and are also close in direction, limited capacity may group them into the same coordinate, reducing modality split. However, as discussed in Section~\ref{sec:heterogeneity:definition}, semantic correspondence does not guarantee directional alignment. Therefore, limited capacity alone does not provide a reliable solution to modality split, and an additional alignment procedure is needed to identify corresponding concepts across modalities.

\input{figures/method}

\paragraph{Why alignment during training sacrifices reconstruction.}
The previous results suggest that reducing modality split requires placing corresponding image and text features on a shared latent coordinate. Existing methods encourage this behavior by adding an auxiliary alignment loss during training~\citep{dhimoila2026crossmodal, kaushik2026learning}. We analyze how such losses improve alignment at the cost of reconstruction quality.

We characterize global minimizers $\hat{\vW}$ for two alignment objectives, the group-sparse loss~\citep{kaushik2026learning} and the Iso-Energy alignment loss~\citep{dhimoila2026crossmodal}. Each objective adds an auxiliary term weighted by $\lambda$, which controls the strength of alignment. Because the objectives are symmetric in the image and text modalities, the following propositions report only $\loss_{\mathrm{rec}}(\hat{\vW};\vphi)$, with the same value holding for $\loss_{\mathrm{rec}}(\hat{\vW};\vpsi)$. In both cases, the form of the global minimizer changes at critical values of $\lambda$.
\vspace{-2pt}
\begin{restatable}{proposition}{thmgroup}
    \label{thm:prev:group}
    Consider the case $n=1$, where the feature directions $\vphi,\vpsi \in \sR^d$ satisfy $\rho := \vphi^\top\vpsi \in (0,1)$. Suppose the SAE in~\eqref{eq:encoder} and~\eqref{eq:decoder} has $m \geq 2$ latent coordinates. Consider the loss~\citep{kaushik2026learning}
    \vspace{-4pt}
    \begin{align}
        &\loss_{\mathrm{rec}}(\vW;\vphi)
        +
        \loss_{\mathrm{rec}}(\vW;\vpsi)
        \\
        &\;\;
        +
        \lambda\,
        \E_z\!\bigg[
        {\textstyle \sum_{j \in [m]}}
        \sqrt{\big[\sigma\!\big(\vW^\top\vphi\, z\big)\big]_j^2
              + \big[\sigma\!\big(\vW^\top\vpsi\, z\big)\big]_j^2}
        \bigg].
    \end{align}
    Define
    $\lambda^{\!\star}(\rho) \!:=\! \tfrac{(1-\rho)\E[z^2]}{(2-\sqrt{1+\rho}) \cdot\E[z]}$
    and $\lambda^{\!\star\star}(\rho) \!:=\! \tfrac{\sqrt{1+\rho}\cdot\E[z^2]}{\E[z]}$.
    Up to a permutation matrix $\vP$, a global minimizer $\hat{\vW}$ is:
    \vspace{-4pt}
    \begin{itemize}
        \item If $\lambda < \lambda^\star(\rho)$, then $\hat{\vW} = [\vphi\;\vpsi\;\vzero_{d\times(m-2)}]\,\vP$, where $\loss_{\mathrm{rec}}(\hat{\vW};\vphi) = 0$.
        \item If $\lambda \in \bigl(\lambda^\star(\rho),\,\lambda^{\star\star}(\rho)\bigr)$, then $\hat{\vW} = \big[\tfrac{\vphi+\vpsi}{\norm{\vphi+\vpsi}}\;\vzero_{d\times(m-1)}\big]\,\vP$, where $\loss_{\mathrm{rec}}(\hat{\vW};\vphi) = \tfrac{1-\rho}{2}\,\E[z^2]$.
        \item If $\lambda > \lambda^{\star\star}(\rho)$, then $\hat{\vW}$ satisfies $\sigma(\hat{\vW}^\top\vphi) = \sigma(\hat{\vW}^\top\vpsi) = \vzero$, where $\loss_{\mathrm{rec}}(\hat{\vW};\vphi)  = \E[z^2]$.
  \end{itemize}
\end{restatable}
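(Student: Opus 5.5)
The plan is to use the fact that with $n=1$ the latent is a scalar $z\ge 0$. Every per-sample term is then homogeneous in $z$: reconstruction terms scale like $z^2$ and the group-sparse penalty scales like $z$. The whole objective should therefore reduce to a finite-dimensional problem in a few inner products, with coefficients $\E[z^2]$ and $\E[z]$. Samples with $z=0$ contribute nothing, so I will condition on $z>0$.

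I will normalise decoder columns to unit norm, $\norm{\vw_j}=1$ for every column $\vw_j$ that can activate. This normalisation is not in the statement, but I need it: the stated optimum $[\vphi\;\vpsi]$ with zero reconstruction loss suggests it is implicit. Without it, shrinking the norms trades reconstruction against the penalty and the stated minimizers would not be exact. I will state this explicitly at the start of the proof.

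Under this normalisation, if $\sigma$ (Top-1 with a nonnegativity threshold) selects column $j$ for input $z\vphi$ with $a:=\vw_j^\top\vphi>0$, the image reconstruction error is
\[
\norm{z\vphi - za\vw_j}_2^2 = z^2(1-a^2),
\]
and the analogous formula holds for text with $b:=\vw_k^\top\vpsi$. If nothing is active, the error is $z^2$.

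The key step is a case split on which columns are selected.
\begin{itemize}
\item[(A)] $\vphi$ and $\vpsi$ activate \emph{different} columns. The penalty is then $\lambda z(a+b)$, because each active coordinate contributes its own $\sqrt{\cdot}$. The loss separates as $\E[z^2](1-a^2)+\lambda\E[z]a$ plus the same expression in $b$.
\item[(B)] They activate the \emph{same} column $\vw$. The penalty is $\lambda z\sqrt{a^2+b^2}$, and with $r:=\sqrt{a^2+b^2}$ the loss is $\E[z^2](2-r^2)+\lambda\E[z]\,r$.
\item[(C)] One or both modalities are inactive, contributing $\E[z^2]$ each with no penalty.
\end{itemize}
In every case the objective is \emph{concave} in the relevant variable ($a$, $b$, or $r$). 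So the minimum lies at an endpoint of the feasible range.

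In (A) the feasible range is $a,b\in(0,1]$, attained by $\vw_a=\vphi$, $\vw_b=\vpsi$. Top-1 then selects correctly, since $1>\rho$. In (B) I must compute $\max_{\norm{\vw}=1}\big((\vw^\top\vphi)^2+(\vw^\top\vpsi)^2\big)$. This is $\lambda_{\max}(\vphi\vphi^\top+\vpsi\vpsi^\top)=1+\rho$, attained at $\vw=(\vphi+\vpsi)/\norm{\vphi+\vpsi}$, where $a=b=\sqrt{(1+\rho)/2}$. Endpoints near $0$ reduce to case (C), and mixed configurations (one modality active, the other not) are dominated by the same endpoint comparison.

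This leaves three candidate values:
\[
\text{(A)}: 2\lambda\E[z], \qquad
\text{(B)}: (1-\rho)\E[z^2]+\lambda\sqrt{1+\rho}\,\E[z], \qquad
\text{(C)}: 2\E[z^2].
\]
Solving (A)$=$(B) gives exactly $\lambda^\star(\rho)$, and solving (B)$=$(C) gives $\lambda^{\star\star}(\rho)$. The per-modality reconstruction values follow directly: $0$ in (A), $\E[z^2](1-a^2)=\tfrac{1-\rho}{2}\E[z^2]$ in (B), and $\E[z^2]$ in (C). Unused columns can be set to $\vzero$ (or any non-activating vectors), which accounts for the permutation $\vP$ and the padding.

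The step I expect to be the main obstacle is showing that the middle regime is nonempty, that is $\lambda^\star<\lambda^{\star\star}$, and that (A) never beats (C) in between. It also requires ruling out a direct (A)$\to$(C) transition. I plan to check $(1-\rho)<\sqrt{1+\rho}\,(2-\sqrt{1+\rho})$: substituting $t=\sqrt{1+\rho}\in(1,\sqrt2)$ turns it into $2-t^2<2t-t^2$, which holds since $t>1$. The (A)-versus-(C) crossing is at $\lambda=\E[z^2]/\E[z]$, which lies between $\lambda^\star$ and $\lambda^{\star\star}$, where (B) is already optimal. A secondary care point is Top-1 tie-breaking and the boundary values $\lambda=\lambda^\star$ or $\lambda=\lambda^{\star\star}$. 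These are excluded by the statement's open intervals, so I will only remark on them.
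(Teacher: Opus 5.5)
Your proposal is correct and follows essentially the same route as the paper: factor out $\E[z^2]$ and $\E[z]$ by homogeneity of Top-1, then compare three values and read off $\lambda^\star$ and $\lambda^{\star\star}$ from their crossings, namely the separate-column value, the shared-column value (concave in $r$, with $r^2\le 1+\rho$ from the top eigenvector of $\vphi\vphi^\top+\vpsi\vpsi^\top$), and the inactive value. Your explicit unit-norm restriction is exactly what the paper tacitly uses (its proof speaks of a ``unit column''), and your extra checks---concavity in the separate case, the mixed configurations, and $\lambda^\star<\E[z^2]/\E[z]<\lambda^{\star\star}$---supply steps that the paper's comparison asserts without justification.
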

\begin{restatable}{proposition}{thmalign}
    \label{thm:prev:align}
    Consider the case $n=1$, where the feature directions $\vphi,\vpsi \in \sR^d$ satisfy $\rho := \vphi^\top\vpsi \in (0,1)$. Suppose the SAE in~\eqref{eq:encoder} and~\eqref{eq:decoder} has $m \geq 2$ latent coordinates.
    Consider the loss~\citep{dhimoila2026crossmodal}
    \vspace{-4pt}
    \begin{align}
        &\loss_{\mathrm{rec}}(\vW;\!\vphi)
        \!+\!
        \loss_{\mathrm{rec}}(\vW;\!\vpsi)
        \!-\!
        \lambda
        \E_{z}\!\Big[
        \sigma\!\big(\vW^{\!\top}\!\!\vphi z\big)^{\!\!\top}\!
        \sigma\!\big(\vW^{\!\top}\!\!\vpsi z\big)
        \Big]\!.
    \end{align}
    Define $\lambda^\star(\rho) := \tfrac{2(1-\rho)}{1+\rho}$.
    Up to a permutation matrix $\vP$, a global minimizer $\hat{\vW}$ is:
    \vspace{-4pt}
    \begin{itemize}
        \item If $\lambda < \lambda^\star(\rho)$, then $\hat{\vW} = [\vphi\;\vpsi\;\vzero_{d\times(m-2)}]\,\vP$, where $\loss_{\mathrm{rec}}(\hat{\vW};\vphi)  = 0$.
        \item If $\lambda > \lambda^\star(\rho)$, then $\hat{\vW} = \big[\tfrac{\vphi+\vpsi}{\norm{\vphi+\vpsi}} \; \vzero_{d\times(m-1)}\big]\vP$, where $\loss_{\mathrm{rec}}(\hat{\vW};\vphi) = \tfrac{1-\rho}{2}\,\E[z^2]$.
  \end{itemize}
\end{restatable}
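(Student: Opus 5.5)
The plan is to exploit the fact that with $n=1$ the latent code is a single scalar $z\ge 0$. Then for each modality, Top-$1$ selects one column, and every term of the objective factors as $\E[z^2]$ times a deterministic function of $\vW$.

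I will work under the normalization implicit in the statement: nonzero decoder columns have unit norm, as in the claimed minimizers. Without this constraint a shared column would prefer norm $>1$, so it must be assumed. Fix $\vW$. Let $a$ be the index selected by $\sigma(\vW^\top\vphi z)$ and $b$ the index selected by $\sigma(\vW^\top\vpsi z)$; these do not depend on $z>0$.

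With $t_\phi:=\vw_a^\top\vphi\ge 0$, the reconstruction term is $\loss_{\mathrm{rec}}(\vW;\vphi)=\E[z^2]\,\|\vphi-t_\phi\vw_a\|^2=\E[z^2](1-t_\phi^2)$, and similarly for $\vpsi$. The alignment term is $\lambda\E[z^2]\,t_\phi t_\psi\,\mathbf 1[a=b]$. If no coordinate is positive, the corresponding reconstruction error is $\E[z^2]$ and there is no alignment contribution. So I split into three configurations.

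\emph{Separate columns ($a\neq b$), or at most one active.} The alignment term vanishes. The objective is $\E[z^2](2-t_\phi^2-t_\psi^2)\ge 0$, with equality iff $\vw_a=\vphi$ and $\vw_b=\vpsi$. I must check that $\vW=[\vphi\;\vpsi\;\vzero]\vP$ is consistent with Top-$1$: $\vphi^\top\vphi=1>\rho=\vpsi^\top\vphi$ and symmetrically, while the zero columns give $0<\rho$, so $\vphi$ selects its own column and so does $\vpsi$. Hence the best value in this configuration is $0$, and any configuration with an inactive modality costs at least $\E[z^2]$.

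\emph{Shared column ($a=b$, $\vw_a=\vu$ a unit vector).} The objective becomes $\E[z^2]\big[2-(\vu^\top\vphi)^2-(\vu^\top\vpsi)^2-\lambda(\vu^\top\vphi)(\vu^\top\vpsi)\big]$. The key observation is that $(\vu^\top\vphi)^2+(\vu^\top\vpsi)^2=\vu^\top(\vphi\vphi^\top+\vpsi\vpsi^\top)\vu$ and $(\vu^\top\vphi)(\vu^\top\vpsi)=\vu^\top\tfrac12(\vphi\vpsi^\top+\vpsi\vphi^\top)\vu$ are both maximized by the same vector $\vu=(\vphi+\vpsi)/\|\vphi+\vpsi\|$. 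This follows from diagonalizing both matrices in the basis $\{(\vphi\pm\vpsi)/\|\cdot\|\}$ of $\mathrm{span}(\vphi,\vpsi)$, where the top eigenvalues are $1+\rho$ and $(1+\rho)/2$ respectively. This $\vu$ gives $\vu^\top\vphi=\vu^\top\vpsi=\sqrt{(1+\rho)/2}>0$, so it is indeed selected by both modalities, with the remaining columns zero. The minimal value is $\E[z^2]\big[(1-\rho)-\lambda(1+\rho)/2\big]$, and each modality incurs reconstruction loss $\E[z^2](1-\tfrac{1+\rho}{2})=\tfrac{1-\rho}{2}\E[z^2]$.

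\emph{Comparison.} The shared configuration beats the separated one iff $(1-\rho)-\lambda(1+\rho)/2<0$, i.e.\ $\lambda>2(1-\rho)/(1+\rho)=\lambda^\star(\rho)$. This yields the two regimes, with the stated reconstruction values. The permutation $\vP$ accounts for the arbitrary placement of the active columns.

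The main obstacle I expect is the shared-column case. There I must justify that a single $\vu$ simultaneously optimizes both quadratic forms; the common eigenbasis handles this. I must also make sure that the optimum is not improved by mixed configurations, for example an extra column that steals the Top-$1$ selection of one modality. Any such configuration falls into the separated case, whose value is already bounded below by $0$, so the case split is exhaustive.
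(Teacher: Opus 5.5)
Your proposal is correct and follows essentially the same route as the paper. You factor out $\E[z^2]$, then compare the separated value $0$, the inactive case, and the shared value $(1-\rho)-\lambda\tfrac{1+\rho}{2}$ attained at $(\vphi+\vpsi)/\norm{\vphi+\vpsi}$. The paper obtains this as the top eigenvector of the single matrix $\vphi\vphi^\top+\vpsi\vpsi^\top+\tfrac{\lambda}{2}(\vphi\vpsi^\top+\vpsi\vphi^\top)$, which is equivalent to your common-eigenbasis argument for $\lambda\ge 0$. Your explicit unit-norm assumption on nonzero columns and your Top-1 consistency checks make precise what the paper leaves implicit in its phrase ``the same unit column.''
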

\vspace{-2pt}
Propositions~\ref{thm:prev:group} and~\ref{thm:prev:align} illustrate the trade-off between reconstruction and latent alignment. For both losses, a small $\lambda$ keeps the image and text feature directions in two separate decoder columns, $\vphi$ and $\vpsi$, so they remain on distinct latent coordinates. Once $\lambda$ exceeds $\lambda^\star(\rho)$, the two directions are represented by a single shared column, $\frac{\vphi+\vpsi}{\norm{\vphi+\vpsi}}$, placing both on one coordinate. This enhances latent alignment but reduces reconstruction quality, with $\loss_{\mathrm{rec}}(\hat{\vW};\vphi)=\tfrac{1-\rho}{2}\E[z^2]$. For the group-sparse loss, an even larger $\lambda$ beyond $\lambda^{\star\star}(\rho)$ drives the SAE to a degenerate solution where both features become inactive.

\input{figures/synthetic_specific}
\input{figures/synthetic_compare_baselines}

\section{Modality-Specific Sparse Autoencoders \hspace{32pt} and Post-Hoc Alignment}
\label{sec:ours}
\vspace{-4pt}
Building on Section~\ref{sec:theory}, we propose a two-stage method. First, we train separate SAEs for image and text embeddings to preserve modality-specific feature directions. Then, we align their latent coordinates using activation correlations. Figure~\ref{fig:method} provides an overview of this procedure.

\paragraph{Modality-specific sparse autoencoders.}
We train separate SAEs for image and text embeddings, with weight matrices $\vV$ and $\vW$, respectively, using the reconstruction loss in~\eqref{eq:loss}. We minimize $\loss_{\mathrm{rec}}(\vV; \vPhi)$ and $\loss_{\mathrm{rec}}(\vW; \vPsi)$ separately, without any cross-modal alignment constraint. This prevents image and text feature directions from competing for the same columns of the weight matrix, allowing each modality to preserve its own directions before cross-modal correspondences are identified.

\paragraph{Post-hoc alignment.}
Because the latent coordinates of the two SAEs are not naturally aligned, we use the correlation matrix $\vC$ in~\eqref{eq:correlation} to identify corresponding image and text latents.
We find a permutation matrix $\hat{\vP}$ that maximizes the total correlation between aligned coordinates\footnote{Dead latent coordinates are excluded in practice. Thus,~\eqref{eq:permutation} is applied to the corresponding submatrix of $\vC$.},
\vspace{-2pt}
\begin{align}
    \label{eq:permutation}
    \hat{\vP} \in \argmax_{\vP \in \mathcal{P}_m} \tr(\vC \vP),
\end{align}
where $\mathcal{P}_m$ denotes the set of all $m \times m$ permutation matrices. The trace sums the correlations of image and text coordinates matched by $\vP$. We compute this assignment using the Hungarian algorithm~\citep{kuhn1955hungarian}. We then apply the permutation only to the SAE for the text modality as
\vspace{-2pt}
\begin{align}
    \label{eq:permuted}
    \tilde{\vz}(\vy; \! \vW) \!:=\! \sigma\big(\hat{\vP}^{\!\top}\! \vW^{\!\top}\! \vy\big),
    \;\;
    \tilde{\vy}(\vy; \! \vW) \!:=\! \vW \hat{\vP} \tilde{\vz}(\vy;\!\vW).
\end{align}
This alignment simply permutes the learned features of the text modality without changing their directions. Consequently, it preserves the feature geometry of each modality while assigning corresponding features from the image and text modalities to the same coordinates in the latent codes.

\paragraph{Inference.}
Embeddings are encoded into latent codes using modality-specific SAEs, and the permutation is applied to align their indices as in~\eqref{eq:permuted}. This alignment enables cross-modal tasks such as retrieval in the aligned latent space, while encoding and decoding remain modality-specific.

\section{Experiments}
\label{sec:experiments}
\vspace{-2pt}
This section validates the theoretical findings in Section~\ref{sec:theory} and evaluates the approach proposed in Section~\ref{sec:ours}. Section~\ref{subsec:synthetic_exp} uses synthetic embeddings where ground-truth feature directions and cross-modal correspondences are known, allowing us to directly test the theoretical predictions. Section~\ref{subsec:real_exp} evaluates our method on real-world datasets to assess its effectiveness on cross-modal tasks.

\subsection{Validation on Synthetic Embeddings}
\label{subsec:synthetic_exp}

We validate two key design choices of the method proposed in Section~\ref{sec:ours}. First, we test whether training modality-specific SAEs better preserves image and text feature directions than training a shared SAE. Second, we test whether post-hoc alignment can align corresponding latent coordinates without sacrificing feature recovery. These design choices are motivated by the analysis in Section~\ref{sec:theory}.

\paragraph{Setup.}
To test the first choice, we compare a \emph{shared SAE}, which uses one weight matrix for both modalities, with \emph{modality-specific SAEs}, which use separate weight matrices for image and text embeddings. For a fair comparison, we match the total number of trainable parameters by using $m/2$ latent coordinates for each modality-specific SAE when the shared SAE uses $m$. Thus, any performance difference reflects how capacity is distributed across modalities, rather than model size.

To test the second choice, we compare our post-hoc alignment method with two auxiliary-loss baselines~\citep{dhimoila2026crossmodal, kaushik2026learning}. We report the average over three independent runs. Details on synthetic embedding generation and training setup are provided in Appendix~\ref{app:synth_detail}.

\input{tables/real_sae_full}
\input{figures/steering}

\paragraph{Metrics.}
We evaluate each method along three axes: \emph{Reconstruction Error} and \emph{Feature Recovery Error} for reconstruction quality; \emph{Latent Alignment Error} and \emph{Feature Alignment Error} for cross-modal alignment; and \emph{Feature Collapse Rate} for whether same-concept features across modalities are collapsed into a single learned direction.
See Appendix~\ref{app:synth_metrics} for formal definitions.

\paragraph{Modality-specific sparse autoencoders better preserve feature directions under heterogeneity.}
Figure~\ref{fig:synthetic_specific} reports the results obtained by varying the cosine distance $d_{\cos}$ between corresponding cross-modal features $\vphi_i$ and $\vpsi_i$. In Figure~\ref{fig:synthetic_specific:a}, the shared SAE exhibits feature collapse when image and text feature directions are geometrically close but not identical. Modality-specific SAEs avoid this collapse because each modality has its own SAE. Figures~\ref{fig:synthetic_specific:b} and~\ref{fig:synthetic_specific:c} show that this collapse leads to larger reconstruction and feature recovery errors. This is consistent with the analysis in Section~\ref{sec:theory}, where representing different feature directions with a single decoder column increases reconstruction loss. At zero distance ($d_{\cos}=0$), the image and text feature directions coincide ($\forall i\in[n], \vphi_i=\vpsi_i$), so there is no cross-modal feature heterogeneity. In this idealized case, the shared SAE can represent the same concept with a single coordinate, which explains its lower reconstruction and recovery errors. Overall, these results show that modality-specific SAEs better preserve feature directions when semantically corresponding features differ across modalities, leading to better feature recovery.

\paragraph{Post-hoc alignment improves alignment without sacrificing feature recovery.}
Figure~\ref{fig:synthetic_compare_baselines} reports the results obtained by varying the auxiliary-loss weight $\lambda$. We fix the cosine distance between corresponding cross-modal features at $0.5$ to reflect the level of cross-modal feature heterogeneity observed in Figure~\ref{fig:multi_density}. For the Iso-Energy alignment loss~\citep{dhimoila2026crossmodal}, increasing $\lambda$ improves cross-modal alignment but degrades feature recovery, consistent with the reconstruction-alignment trade-off shown in Proposition~\ref{thm:prev:align}. For the group-sparse loss~\citep{kaushik2026learning}, increasing $\lambda$ eventually pushes the SAE into a degenerate regime where both feature recovery and alignment deteriorate, consistent with Proposition~\ref{thm:prev:group}. In contrast, our post-hoc approach preserves the feature recovery achieved by modality-specific SAEs before alignment, while achieving better alignment than the auxiliary-loss baselines. These results support decoupling cross-modal alignment from the reconstruction objective.

\subsection{Evaluation on Real-World Data}
\label{subsec:real_exp}
We evaluate our method on real-world embeddings extracted from a pre-trained VLM. We assess whether latent codes support cross-modal tasks and preserve semantically coherent activations. In particular, following prior work~\citep{kaushik2026learning}, we evaluate whether estimated latent codes $\tilde{\vz}$ in~\eqref{eq:correlation} provide useful representations for cross-modal retrieval, concept steering, and monosemanticity evaluation.

\paragraph{Experimental protocol.}
We use \texttt{CC-3M}~\citep{sharma2018conceptual} as a paired image-text dataset. From this dataset, we extract image and text embeddings using the \texttt{CLIP} model~\citep{radford21clip} with the ViT-B/32 architecture, and train SAEs on the resulting paired embeddings. We compare the method proposed in Section~\ref{sec:ours} with two auxiliary-loss baselines~\citep{dhimoila2026crossmodal, kaushik2026learning}. To ensure a fair comparison between modality-specific and shared SAE approaches, all methods use the same total number of SAE parameters. We follow prior training protocols~\citep{papadimitriou2025interpreting} and report averages over three runs. Details are provided in Appendix~\ref{app:real_detail}.

\paragraph{Our method improves cross-modal performance while preserving reconstruction.}
We evaluate reconstruction quality and performance across two cross-modal tasks. Reconstruction quality is measured by the mean squared error between input embeddings and their reconstructions. For cross-modal retrieval on \texttt{MS-COCO}, we rank image and text latent codes by cosine similarity and report Recall@$k$. For zero-shot image classification on \texttt{ImageNet1K}~\citep{deng2009imagenet}, we report top-1 accuracy.

Table~\ref{tab:real_sae_compact} reports the results for all methods. Our method achieves the lowest reconstruction error and the strongest cross-modal retrieval performance, while remaining competitive on zero-shot classification. In cross-modal retrieval, our method outperforms the strongest baseline by $8.9$ points in image-to-text Recall@$1$ ($16.0$ vs.\ $7.1$) and by $7.1$ points in text-to-image Recall@$1$ ($11.4$ vs.\ $4.3$). These results show that post-hoc alignment improves cross-modal performance without sacrificing the reconstruction quality of modality-specific SAEs. Additional results across a broader range of VLM sizes are provided in Appendix~\ref{app:extended_experiments}.

\paragraph{Our method enables more effective cross-modal concept steering.}
We evaluate whether the aligned latent coordinates support controllable steering of object concepts. We treat each of the $80$ object categories in \texttt{COCO} as a target concept. For each target concept, we compute the mean activation of each text latent coordinate on captions that mention the concept and on randomly sampled captions that do not. We then select the text latent coordinate with the largest difference between the two mean activations, which identifies the coordinate most associated with the target concept. Using the aligned image coordinate, we use the corresponding decoder column in the image SAE as the steering vector.

For each target concept, we select $100$ source images from the test set that do not contain the target category. We add the steering vector to each source image embedding, producing a steered embedding intended to move it toward the target concept. We then rank all test image embeddings by cosine similarity to each steered embedding and retrieve the nearest images. Successful steering should retrieve images that contain the target concept.

Figure~\ref{fig:steering} reports the retrieval performance and qualitative examples of concept steering. Our method achieves the highest mean average precision and mean reciprocal rank over retrieved test images that contain the target concept. The qualitative examples further show that our method retrieves target-concept images more consistently. These results indicate that post-hoc latent coordinate alignment enables more reliable control than enforcing alignment through an auxiliary loss during training.

\input{figures/monosemanticity_score}

\paragraph{Our method preserves more semantically coherent latents.}
We examine whether the learned codes are activated by semantically coherent inputs. We use the monosemanticity score~\citep{pach2026sparse} on the validation set of \texttt{CC-3M}. This score measures whether inputs that activate the same latent are close to one another in an external embedding space. We use the \texttt{MetaCLIP} model~\citep{chuang2026meta} as the external encoder. Larger values indicate that the latent responds to a more semantically coherent concept.

Figure~\ref{fig:monosemanticity_score} reports monosemanticity scores for each latent coordinate, sorted in descending order. Our method maintains higher scores across more latent coordinates in both modalities. This indicates that preserving modality-specific feature directions gives more semantically coherent latents.

\section{Conclusion}
\vspace{-2pt}
We studied how features are organized across modalities in joint embedding spaces of vision--language models. We characterized \emph{cross-modal feature heterogeneity}, where the same semantic concept can have different feature directions across image and text modalities. We showed that this heterogeneity can give rise to modality split in SAEs, and that existing auxiliary-loss approaches trade reconstruction quality for latent alignment. Motivated by this limitation, we proposed a simple approach that trains modality-specific SAEs and aligns their latent codes through coactivation. In experiments, our method preserves reconstruction quality and improves cross-modal alignment. In summary, effective alignment in joint embedding spaces should preserve each modality's feature geometry.


%% file: figures/problem.tex
\begin{figure*}[t]
    \centering
    \begin{subfigure}[t]{0.5\textwidth}
        \centering
        \includegraphics[width=\textwidth]{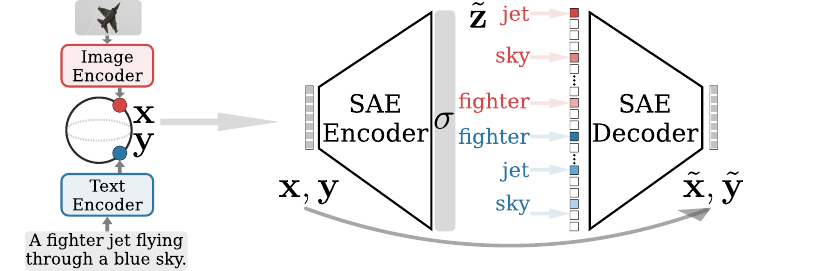}
        \caption{Modality split in sparse autoencoder}
        \label{fig:problem:a}
    \end{subfigure}
    \hfill
    \begin{subfigure}[t]{0.45\textwidth}
        \centering
        \includegraphics[width=\textwidth]{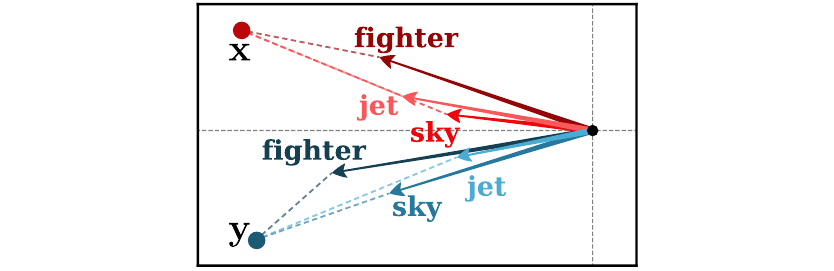}
        \caption{Cross-modal feature heterogeneity}
        \label{fig:problem:b}
    \end{subfigure}
    \caption{
    Illustration of modality split and cross-modal feature heterogeneity on joint embedding spaces.
    (a) An SAE encodes an image or text embedding ($\vx$ or $\vy$) from a VLM into a sparse latent code $\tilde{\vz}$ and then reconstructs the original input as $\tilde{\vx}$ or $\tilde{\vy}$. Ideally, the same concept should activate the same coordinate in $\tilde{\vz}$ across modalities. However, prior work observes the \emph{modality split}, where the same concept (\eg \emph{sky}, \emph{jet}, or \emph{fighter}) activates different coordinates for images (red) and text (blue). (b) We show that this split is driven by \emph{cross-modal feature heterogeneity}, where corresponding features (shown as arrows) fail to align directionally across modalities.
    }
    \label{fig:problem}
\end{figure*}

%% file: figures/multi_density.tex
\begin{figure*}[t]
  \centering
  \includegraphics[width=0.95\textwidth]{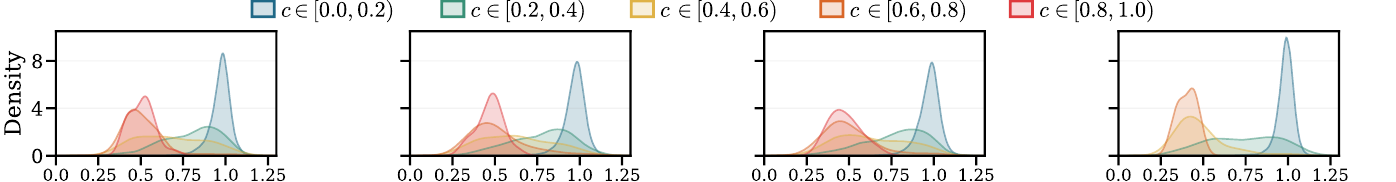}
  \resizebox{\textwidth}{!}{%
    \begin{tabular}{*{4}{p{0.22\textwidth}}}
        (a) \makecell{\texttt{CLIP} ViT-B/32} &
        (b) \makecell{\texttt{MetaCLIP} B/32} &
        (c) \makecell{\texttt{OpenCLIP} B/32} &
        (d) \makecell{\texttt{SigLIP} Base}
    \end{tabular}%
}
  \caption{
    Distribution of cosine distances $d_{\cos}$ between image--text feature pairs estimated from embeddings of four VLMs. We group image--text feature pairs ($\hat{\vphi}_i,\hat{\vpsi}_j$) by their coactivation correlation $c_{i,j}$ in~\eqref{eq:correlation}, shown in different colors. The value $c_{i,j}$ measures how strongly the $i$-th image feature and the $j$-th text feature coactivate on image--text embeddings, so pairs with larger correlations are more likely to represent the same shared concept. Across all models, the distribution of distances $d_{\cos}$ remains centered around a positive value, even for high-correlation pairs ($c\geq0.8$, shown in blue). This observation supports the presence of cross-modal feature heterogeneity.
  }
  \label{fig:multi_density}
\end{figure*}

%% file: figures/qualitative_heterogeneity.tex
\begin{figure*}[t]
  \centering
  \includegraphics[width=0.95\textwidth]{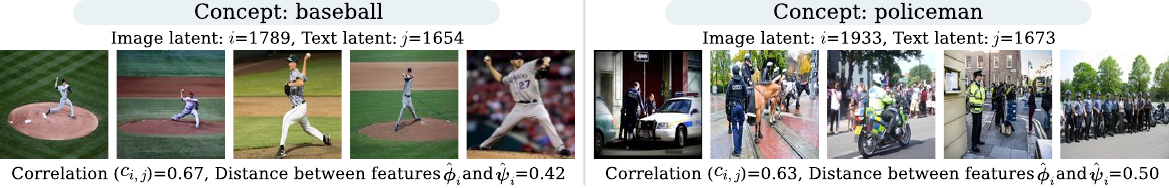}
  \caption{
  Image examples with high coactivation correlation $c_{i,j}$ from the \texttt{CLIP} setting in Figure~\ref{fig:multi_density}, where $i$ and $j$ denote the image and text latent indices, respectively. The images reveal coherent concepts such as \emph{baseball} and \emph{policeman}, but the corresponding features $\hat{\vphi}_i$ and $\hat{\vpsi}_j$ remain directionally separated, with feature distances of $0.42$ and $0.50$, respectively.
  }
  \label{fig:qualitative_heterogeneity}
\end{figure*}

%% file: figures/method.tex
\begin{figure*}[t]
  \centering
  \includegraphics[width=0.9\textwidth]{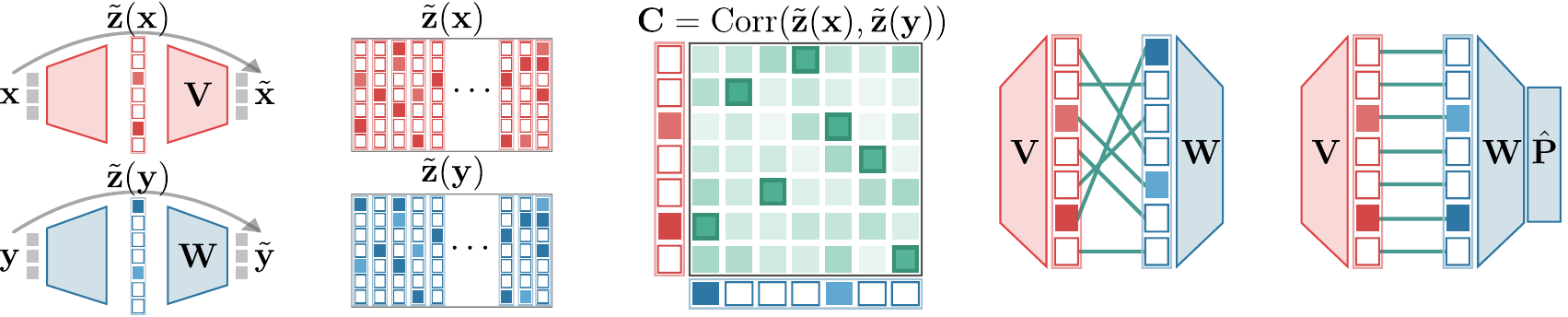}
  \resizebox{\textwidth}{!}{%
    \begin{tabular}{*{5}{p{0.2\textwidth}}}
        1. \makecell{Train SAEs} &
        2. \makecell{Estimate \\ Latent Codes} &
        3. \makecell{Compute \\ Correlations} &
        4. \makecell{Match \\ Coordinates} &
        5. \makecell{Permute SAE}
    \end{tabular}%
}
\caption{
    Overview of our approach.
    We first train modality-specific SAEs to reconstruct image and text embeddings, obtaining latent codes $\tilde{\vz}(\vx)$ and $\tilde{\vz}(\vy)$. We then compute their correlation matrix $\vC$ in~\eqref{eq:correlation} over paired training data and apply the Hungarian algorithm to obtain the assignment $\hat{\vP}$ in~\eqref{eq:permutation}. Finally, we reindex the text latent coordinates using $\hat{\vP}$, aligning corresponding image and text features to shared latent indices without altering the learned feature directions.
}
\label{fig:method}
\end{figure*}

%% file: figures/synthetic_specific.tex
\begin{figure*}[t]
  \centering
  \includegraphics[width=0.9\textwidth]{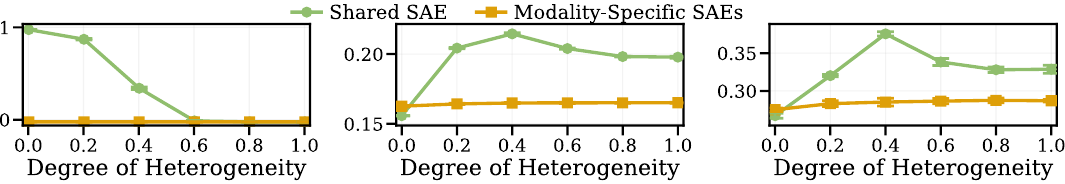}
\begin{subfigure}[b]{0.33\textwidth}
\centering
\caption{Feature collapse rate $(\downarrow)$}
\label{fig:synthetic_specific:a}
\end{subfigure}\hfill
\begin{subfigure}[b]{0.33\textwidth}
\centering
\caption{Reconstruction error $(\downarrow)$}
\label{fig:synthetic_specific:b}
\end{subfigure}\hfill
\begin{subfigure}[b]{0.33\textwidth}
\centering
\caption{Feature recovery error $(\downarrow)$}
\label{fig:synthetic_specific:c}
\end{subfigure}
  \caption{
  Comparison between a shared SAE trained on both modalities \textcolor{cShared}{(green)} and modality-specific SAEs trained separately for each modality \textcolor{cSeparated}{(yellow)} as the cross-modal feature distance $d_{\cos}$ varies. Each point corresponds to SAEs trained on synthetic embeddings sampled from ground-truth feature matrices $\vPhi$ and $\vPsi$, where the x-axis shows the cosine distance $d_{\cos}$ between corresponding image and text feature directions. When corresponding directions are distinct but geometrically close, the shared SAE tends to collapse them into a single learned direction, leading to higher reconstruction and feature recovery errors. In contrast, modality-specific SAEs avoid this collapse and better preserve feature directions under cross-modal feature heterogeneity, which corresponds to nonzero distance.
  }
  \label{fig:synthetic_specific}
\end{figure*}

%% file: figures/synthetic_compare_baselines.tex
\begin{figure*}[t]
  \centering
  \includegraphics[width=0.9\textwidth]{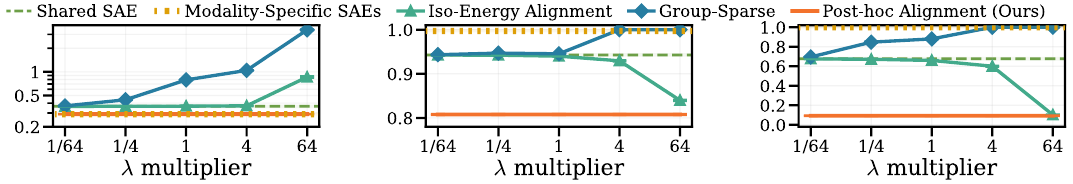}
\begin{subfigure}[b]{0.34\textwidth}
\centering
\caption{Feature recovery error $(\downarrow)$}
\label{fig:synthetic_compare_baselines:a}
\end{subfigure}\hfill
\begin{subfigure}[b]{0.33\textwidth}
\centering
\caption{Latent alignment error $(\downarrow)$}
\label{fig:synthetic_compare_baselines:b}
\end{subfigure}\hfill
\begin{subfigure}[b]{0.33\textwidth}
\centering
\caption{Feature alignment error $(\downarrow)$}
\label{fig:synthetic_compare_baselines:c}
\end{subfigure}
    \caption{
    Comparison between Post-hoc Alignment method \textcolor{cPosthocAlign}{(orange)} and two auxiliary-loss baselines: Iso-Energy alignment~\citep{dhimoila2026crossmodal} \textcolor{cIsoEnergy}{(teal)} and group-sparse loss~\citep{kaushik2026learning} \textcolor{cGroupSparse}{(blue)}. The weight $\lambda$ of the auxiliary loss is shown as a multiplier of the default value in the original papers. For reference, we also include the shared SAE \textcolor{cShared}{(green)} and modality-specific SAEs \textcolor{cSeparated}{(yellow)}, neither of which depend on $\lambda$. Our method achieves both lower feature recovery error and lower alignment error than the baselines.
}
\label{fig:synthetic_compare_baselines}
\end{figure*}

%% file: tables/real_sae_full.tex
\begin{table*}[t]
    \centering
    \caption{
    Reconstruction and alignment quality of various methods. Reconstruction quality is measured by mean squared error between the input embedding and its reconstruction. Cross-modal alignment is evaluated by Recall@$k$ for $k \in \{1, 5, 10\}$ on image-to-text and text-to-image retrieval tasks on \texttt{MS-COCO}~\citep{lin2014microsoft}, and by top-1 zero-shot classification accuracy on \texttt{ImageNet1K}~\citep{deng2009imagenet}. The best and second-best results for each metric are shown in bold and underlined, respectively.
    }
    {
    \setlength{\tabcolsep}{2pt}
    \resizebox{\linewidth}{!}{
    \begin{tabular}{l c ccc ccc c c}
        \toprule
        & \multicolumn{7}{c}{MS-COCO~\citep{lin2014microsoft}} & \multicolumn{2}{c}{ImageNet1K~\citep{deng2009imagenet}} \\
        \cmidrule(lr){2-8} \cmidrule(lr){9-10}
        & Recon. ($\downarrow$)
        & \multicolumn{3}{c}{Image-to-Text ($\uparrow$)}
        & \multicolumn{3}{c}{Text-to-Image ($\uparrow$)}
        & Recon. ($\downarrow$) & Zero-shot ($\uparrow$) \\
        \cmidrule(lr){2-2} \cmidrule(lr){3-5} \cmidrule(lr){6-8} \cmidrule(lr){9-9} \cmidrule(lr){10-10}
        Methods & MSE
        & Recall@1 & Recall@5 & Recall@10
        & Recall@1 & Recall@5 & Recall@10
        & MSE & Accuracy \\
        \midrule
        Shared SAE
        & 0.090
        & 6.1 {\tiny $(\pm 0.3)$} & 13.3 {\tiny $(\pm 0.8)$} & 17.8 {\tiny $(\pm 0.8)$}
        & 3.4 {\tiny $(\pm 0.2)$} & 9.5 {\tiny $(\pm 0.5)$} & 14.0 {\tiny $(\pm 0.8)$}
        & 0.118 & 15.7 {\tiny $(\pm 1.3)$} \\
        + Iso-Energy alignment loss
        & 0.091
        & 4.7 {\tiny $(\pm 1.9)$} & 10.6 {\tiny $(\pm 3.4)$} & 14.6 {\tiny $(\pm 4.4)$}
        & 2.7 {\tiny $(\pm 0.7)$} & 7.4 {\tiny $(\pm 2.1)$} & 10.8 {\tiny $(\pm 3.0)$}
        & 0.118 & 13.0 {\tiny $(\pm 2.3)$} \\
        + Group-sparse loss
        & 0.105
        & \underline{7.1} {\tiny $(\pm 0.1)$} & \underline{16.7} {\tiny $(\pm 0.4)$} & \underline{23.8} {\tiny $(\pm 0.3)$}
        & \underline{4.3} {\tiny $(\pm 0.1)$} & \underline{12.2} {\tiny $(\pm 0.1)$} & \underline{18.3} {\tiny $(\pm 0.2)$}
        & 0.134 & \textbf{26.6} {\tiny $(\pm 0.3)$} \\
        Modality-Specific SAEs
        & \textbf{0.089}
        & 0.0 {\tiny $(\pm 0.0)$} & 0.1 {\tiny $(\pm 0.1)$} & 0.2 {\tiny $(\pm 0.1)$}
        & 0.0 {\tiny $(\pm 0.0)$} & 0.1 {\tiny $(\pm 0.0)$} & 0.1 {\tiny $(\pm 0.1)$}
        & \textbf{0.116} & 0.1 {\tiny $(\pm 0.0)$} \\
        \rowcolor{gray!20}
        + Post-hoc Alignment (Ours)
        & \textbf{0.089}
        & \textbf{16.0} {\tiny $(\pm 0.5)$} & \textbf{34.1} {\tiny $(\pm 1.4)$} & \textbf{44.5} {\tiny $(\pm 1.6)$}
        & \textbf{11.4} {\tiny $(\pm 0.4)$} & \textbf{27.2} {\tiny $(\pm 1.2)$} & \textbf{37.0} {\tiny $(\pm 1.3)$}
        & \textbf{0.116} & \underline{25.1} {\tiny $(\pm 0.3)$} \\
        \bottomrule
    \end{tabular}
    }
    }
    \label{tab:real_sae_compact}
\end{table*}

%% file: figures/steering.tex
\begin{figure*}[t]
    \centering
    \includegraphics[width=\linewidth]{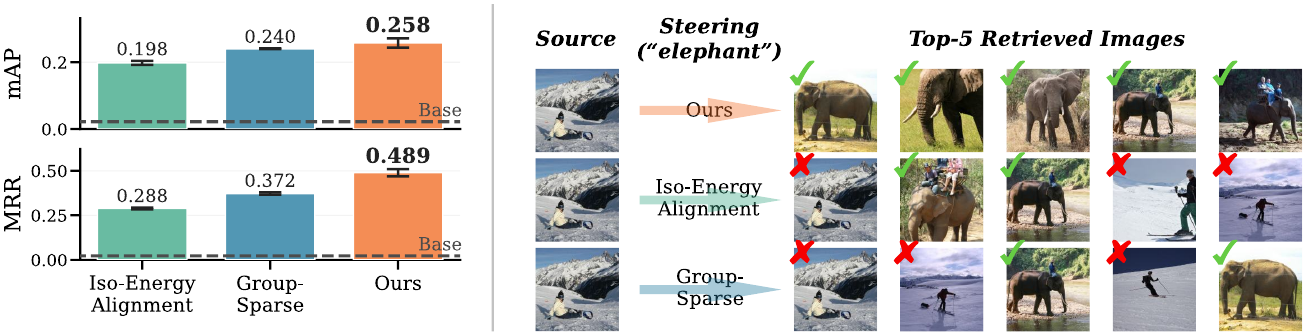}
    \caption{
    Image retrieval under concept latent steering on \texttt{MS-COCO}. We steer source image embeddings using target feature directions identified through aligned latent coordinates. Retrieval performance is measured by mean average precision (mAP) and mean reciprocal rank (MRR) over retrieved images containing the target concept. Qualitative examples show source images and their top retrieved images after steering. Our method steers images toward target concepts more effectively.}
    \label{fig:steering}
\end{figure*}

%% file: figures/monosemanticity_score.tex
\begin{figure}[t]
    \centering
    \includegraphics[width=0.45\textwidth]{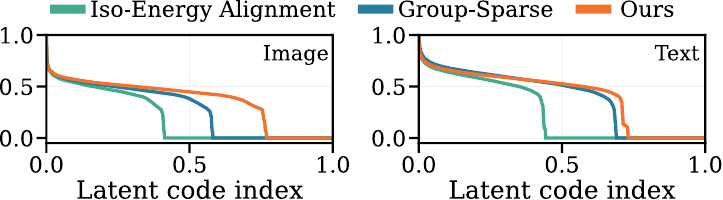}
    \caption{
    Monosemanticity scores for coordinates sorted in descending order, shown separately for image and text modalities.
    }
    \label{fig:monosemanticity_score}
\end{figure}

%% file: texts/appendix.tex
\section{Limitations and Future Work}
\label{sec:limitation}

Our analysis relies on the linear representation hypothesis and simplified generative assumptions, which may not fully capture the complexity of real-world VLM embeddings. While our post-hoc alignment improves empirical performance, it assumes that feature correspondence can be inferred from coactivation statistics, which may become less accurate in noisy or weakly aligned settings.

Future work includes extending the analysis beyond the linear representation hypothesis and studying alignment under more complex forms of heterogeneity. While our current experiments focus on CLIP-like contrastive VLMs, another important direction is to evaluate whether the proposed method remains effective for representations extracted from broader VLM architectures, including LLM-based autoregressive VLMs such as LLaVA. Applying these ideas to larger-scale VLMs and downstream tasks may further clarify the role of feature-level alignment in multimodal systems. Moreover, post-hoc alignment could be used as an initialization for subsequent fine-tuning.

\section{Additional Discussion on Cross-Modal Feature Heterogeneity}
\label{app:heterogeneity}

\paragraph{A possible source of cross-modal feature heterogeneity.}
We next discuss why cross-modal feature directions may be unevenly aligned in VLMs trained with objectives such as contrastive learning. Such objectives encourage cross-modal similarity for paired samples, typically by increasing $\E[\vx^\top \vy]$. As the sparsity level $s \to 1$, we have
\begin{align}
    \E \big[ \vx^\top \vy \big]
    &=
    \tr\!\big( \vPhi^\top \vPsi \; \E[\vz\vz^\top] \big)
    =
    s^{n-1} (1-s) {\textstyle\sum_{i \in [n]}} \vphi_i^\top \vpsi_i \cdot \E\big[z_i^2\big| z_i \neq 0\big] + o(1-s)
    .
\end{align}
This expression suggests that each feature pair contributes to positive similarity in proportion to the second moment of its latent coordinate. Consequently, under finite data or finite model capacity, frequently activated features may receive stronger alignment pressure, whereas rarer features may remain less aligned. This provides one possible source of cross-modal feature heterogeneity.

\paragraph{Relation to modality gap.}
The following proposition shows that cross-modal feature heterogeneity is unavoidable whenever the embedding pair exhibits a nontrivial modality gap.

\begin{restatable}{proposition}{thmgap}
\label{thm:heterogeneity}
If there exists a modality gap between an embedding pair, i.e., $\E[\cos(\vx,\vy)] < 1$,
then there exists at least one latent concept that exhibits cross-modal feature heterogeneity.
\end{restatable}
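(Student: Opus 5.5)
We argue by contraposition: we assume that no latent concept exhibits cross-modal feature heterogeneity in the sense of Definition~\ref{def:heterogeneity}, and show that this forces $\E[\cos(\vx,\vy)] = 1$, contradicting the assumed modality gap.

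\textbf{Step 1: no heterogeneity forces identical embeddings.} Suppose $\vphi_i = \vpsi_i$ for every $i \in [n]$. Then $\vPhi = \vPsi$. Under the generative model in~\eqref{eq:embeddings}, a paired image--text embedding shares the same latent code $\vz$. Hence for every realization of $\vz$,
\begin{align}
\vx = \vPhi\vz = \vPsi\vz = \vy .
\end{align}

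\textbf{Step 2: identical embeddings give unit cosine.} Whenever $\vx = \vy \neq \vzero$, we have $\cos(\vx,\vy) = \norm{\vx}_2^2/\norm{\vx}_2^2 = 1$. Taking expectations over $\vz$ yields $\E[\cos(\vx,\vy)] = 1$, which contradicts the hypothesis $\E[\cos(\vx,\vy)] < 1$. Therefore some $i$ must satisfy $\vphi_i \neq \vpsi_i$, which is exactly cross-modal feature heterogeneity for the $i$-th latent concept.

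\textbf{Main obstacle: the event $\vx = \vzero$.} Under~\eqref{eq:sparsity}, the event $\vz = \vzero$ (and possibly other events with $\vPhi\vz = \vzero$) has positive probability. On these events the cosine is undefined. I would handle this as the statement implicitly does: the expectation $\E[\cos(\vx,\vy)]$ is taken over pairs for which the cosine is defined, i.e.\ conditioned on $\vx \neq \vzero$ and $\vy \neq \vzero$. On that conditioning event, Step 1 still gives $\vx = \vy$, so the cosine equals $1$ almost surely and the conclusion is unchanged.

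If one instead adopts a convention such as $\cos(\vzero,\vzero) := 1$, the argument goes through directly without conditioning. Under either convention, nothing else is needed beyond the definition and~\eqref{eq:embeddings}.
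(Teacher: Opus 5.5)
Your proof is correct and follows the paper's own argument: assume $\vphi_i=\vpsi_i$ for all $i$, conclude $\vPhi=\vPsi$ and hence $\vx=\vy$ for the shared code $\vz$, so the cosine equals $1$, contradicting $\E[\cos(\vx,\vy)]<1$. The paper's ``almost surely'' skips over the positive-probability event $\vz=\vzero$, where the cosine is undefined; you address it explicitly, and either of your conventions handles it.
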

\begin{proof}
For contradiction, assume that there is no cross-modal feature heterogeneity. Then $\vphi_i = \vpsi_i$ for all $i \in [n]$, which implies $\vPhi = \vPsi$. Hence, the embeddings satisfy $\vx = \vPhi \vz = \vPsi \vz = \vy$ almost surely. This gives $\cos(\vx,\vy)=1$ almost surely, contradicting the condition that $\E[\cos(\vx,\vy)] < 1$. Therefore, there must exist at least one $i \in [n]$ such that $\vphi_i \neq \vpsi_i$.
\end{proof}
This result formalizes that a modality gap at the embedding must be reflected by at least one directional mismatch between corresponding image and text features under the linear representation hypothesis. The main question studied in this paper is how such heterogeneity affects multimodal SAEs.

\section{Proofs of Theoretical Results}
\label{sec:proofs}

We begin by clarifying notation. Let $\vzero_d$ and $\vzero_{m \times n}$ denote the zero vector in $\sR^d$ and the $m \times n$ zero matrix, respectively. For a matrix $\vZ$, $[\vZ]_{[:,i]}$ denotes its $i$-th column.

\thmlarge*
\begin{proof}
Although Theorem~\ref{thm:m-small} is stated for the regime $m<2n$, the same argument extends beyond this restriction. When $m \ge 2n$, it gives the present corollary. We give a direct proof to make the construction explicit.

For $\vz \in \sR^n_+$, define the event $\sS_k(\vz) := \{\#\{i \in [n] : z_i = 0\} = k\}$
for $k \in \{0\}\cup[n]$, which denotes that exactly $k$ entries of $\vz$ are zero.
From~\eqref{eq:sparsity}, $\Pr(\sS_k(\vz)) = \binom{n}{k}s^k(1-s)^{n-k}$.
In the sparsity regime $s \to 1$, the loss admits the expansion
\begin{align}
    \label{eq:proof:expansion}
    \loss_{\mathrm{rec}}(\vW;\, \vPhi)
    =
    s^{n-1}(1-s)
    \sum_{i \in [n]} \mu'_i\,\|\vphi_i - \vW\sigma(\vW^\top\vphi_i)\|_2^2
    + o(1-s),
\end{align}
where $\mu'_i := \E[z_i^2 \mid z_i \neq 0] > 0$ for $i \in [n]$.
An analogous expansion holds for $\loss_{\mathrm{rec}}(\vW;\,\vPsi)$. Defining $\vtheta_i := \vphi_i$, $\vtheta_{n+i} := \vpsi_i$, and $\mu'_{n+i} := \mu'_i$ for $i \in [n]$, we obtain
\begin{align}
    \label{eq:proof:combined}
    \loss_{\mathrm{rec}}(\vW;\, \vPhi) + \loss_{\mathrm{rec}}(\vW;\, \vPsi)
    =
    s^{n-1}(1-s)
    \sum_{i \in [2n]} \mu'_i\,\|\vtheta_i - \vW\sigma(\vW^\top\vtheta_i)\|_2^2
    + o(1-s).
\end{align}
Since $s^{n-1}(1-s)>0$ and $\mu'_i>0$, the leading term in~\eqref{eq:proof:combined} is minimized to zero precisely when
\begin{align}
    \label{eq:proof:condition}
    \vtheta_i = \vW\sigma(\vW^\top\vtheta_i)
    \quad\text{for all } i \in [2n].
\end{align}
For any permutation matrix $\vP \in \sR^{m \times m}$,
substituting $\hat{\vW} = [\vPhi \;\; \vPsi \;\; \vzero_{d \times (m-2n)}]\vP$
into~\eqref{eq:proof:condition} gives, for all $i \in [2n]$,
\begin{align}
    [\vPhi \;\; \vPsi \;\; \vzero_{d \times (m-2n)}]\vP\,
    \sigma\!\bigl(\vP^\top[\vPhi \;\; \vPsi \;\; \vzero_{d \times (m-2n)}]^\top\vtheta_i\bigr)
    =
    [\vPhi \;\; \vPsi]\,\sigma\!\bigl([\vPhi \;\; \vPsi]^\top\vtheta_i\bigr)
    =
    \vtheta_i.
\end{align}
Therefore, $\hat{\vW}$ minimizes the leading term in~\eqref{eq:proof:combined}, and $\loss_{\mathrm{rec}}(\hat{\vW};\, \vPhi) + \loss_{\mathrm{rec}}(\hat{\vW};\, \vPsi) = o(1-s)$.
\end{proof}

\thmsmall*
\begin{proof}
For each $i \in [n]$, let $\vtheta_i := \vphi_i$, $\vtheta_{n+i} := \vpsi_i$, and $\mu'_i := \E[z_i^2 \mid z_i \neq 0] > 0$, and let $\mu'_{n+i} := \mu'_i$. By the same expansion as in~\eqref{eq:proof:combined}, as $s \to 1$, we have
\begin{align}
    \label{eq:proof:small:expansion}
    \loss_{\mathrm{rec}}(\vW;\, \vPhi) + \loss_{\mathrm{rec}}(\vW;\, \vPsi)
    =
    s^{n-1}(1-s)
    \sum_{i \in [2n]} \mu'_i\,\|\vtheta_i - \vW\sigma(\vW^\top\vtheta_i)\|_2^2
    + o(1-s).
\end{align}
We first fix $[\vw_1 \cdots \vw_m]$. For each $i \in [2n]$, let $\hat{\vw}_i \in \argmax_{\vw \in \{\vw_j\}_{j \in [m]}}\vw^\top\vtheta_i$. Then
\begin{align}
    \sum_{i \in [2n]}\mu'_i \|\vtheta_i - \vW\sigma(\vW^\top\vtheta_i)\|_2^2
    =
    \sum_{i \in [2n]}\mu'_i \|\vtheta_i - \hat{\vw}_i\hat{\vw}_i^\top\vtheta_i\|_2^2
    =
    \sum_{i \in [2n]}\mu'_i  - \sum_{i \in [2n]}\mu'_i (\vtheta_i^\top\hat{\vw}_i)^2\big.
\end{align}
Since $\sum_{i \in [2n]}\mu'_i$ is constant with respect to $\vW$, minimizing the leading term in~\eqref{eq:proof:small:expansion} is equivalent to maximizing
\begin{align}
    \mathcal{J}(\vW)
    :=
    \sum_{i \in [2n]}\mu'_i(\vtheta_i^\top\hat{\vw}_i)^2
    =
    \sum_{i \in [2n]}\hat{\vw}_i^\top\vM_i\,\hat{\vw}_i,
\end{align}
where, for each $i \in [n]$, we define $\vM_i := \mu'_i \vphi_i\vphi_i^\top \in \sR^{d \times d}$ and $\vM_{n+i} := \mu'_i\vpsi_i\vpsi_i^\top \in \sR^{d \times d}$ for $i \in [n]$.
For each $j \in [m]$, define
\begin{align}
\sA_j := \{i \in [2n] | \hat{\vw}_i = \vw_j\},
\end{align}
where $\sA_j$ contains the feature indices whose corresponding feature vectors are extracted by the column $\vw_j$. Then $(\sA_1,\cdots,\sA_m)$ forms a partition of $[2n]$, and
\begin{align}
    \mathcal{J}(\vw)
    =
    \sum_{j \in [m]}
    \vw_j^\top \left(\sum_{i \in \sA_j}\vM_i\right)\vw_j.
\end{align}

For a fixed partition $(\sA_1,\ldots,\sA_m)$, maximizing $\mathcal{J}(\vW)$ decouples over the columns of $\vW$. Hence, for each $j \in [m]$,
\begin{align}
    \max_{\|\vw_j\|_2^2 = 1}
    \vw_j^\top \bigg(\sum_{i \in \sA_j}\vM_i\bigg)\vw_j
    =
    \lambda_{\max} \bigg(\sum_{i \in \sA_j}\vM_i\bigg),
\end{align}
where $\lambda_{\max}$ denotes the largest eigenvalue. The maximum is attained at a unit eigenvector corresponding to this largest eigenvalue.
Therefore, minimizing the leading term of the loss in~\eqref{eq:proof:small:expansion} is equivalent to maximizing
\begin{align}
    \sum_{j \in [m]}
    \lambda_{\max}
    \bigg(\sum_{i \in \sA_j}\vM_i\bigg)
\end{align}
over all partitions $(\sA_1,\ldots,\sA_m)$ of $[2n]$. Thus, choosing each $\hat{\vw}_j$ as a unit eigenvector corresponding to the largest eigenvalue of $\sum_{i \in \sA_j}\vM_i$ yields a global minimizer $\hat{\vW}$ of~\eqref{eq:proof:small:expansion}.

Substituting back into~\eqref{eq:proof:small:expansion}, together with
$\sum_{i \in [2n]}\mu'_i = 2\sum_{i \in [n]}\mu'_i$, gives
\begin{align}
    \loss_{\mathrm{rec}}(\hat{\vW};\, \vPhi) + \loss_{\mathrm{rec}}(\hat{\vW};\, \vPsi)
    =
    s^{n-1}(1-s)
    \bigg(
        2\sum_{i \in [n]}\mu'_i
        -
        \sum_{j \in [m]}\lambda_{\max}\!\bigg(\sum_{i \in \sA_j}\vM_i\bigg)
    \bigg)
    + o(1-s).
\end{align}
It remains to note that the leading coefficient is nonnegative. Since each $\vM_i$ is positive semidefinite,
\begin{align}
    \lambda_{\max}\!\bigg(\sum_{i \in \sA_j}\vM_i\bigg)
    \le
    \tr\!\bigg(\sum_{i \in \sA_j}\vM_i\bigg)
\end{align}
for each $j \in [m]$. Therefore, using the fact that $(\sA_1,\ldots,\sA_m)$ is a partition of $[2n]$ and that
$\|\vphi_i\|_2=\|\vpsi_i\|_2=1$, we obtain
\begin{align}
    \sum_{j \in [m]}
    \lambda_{\max}\!\bigg(\sum_{i \in \sA_j}\vM_i\bigg)
    \leq
    \sum_{j \in [m]}
    \operatorname{tr}\!\bigg(\sum_{i \in \sA_j}\vM_i\bigg)
    =
    \sum_{i \in [2n]}\operatorname{tr}(\vM_i)
    =
    2\sum_{i \in [n]}\mu'_i.
\end{align}

Moreover, assume that no two feature directions are collinear, \ie
$\vtheta_i \neq \pm \vtheta_j$ for all distinct $i,j\in [2n]$, then the above inequality is strict. Since $m<2n$, at least one set $\sA_j$ contains two distinct indices. For this set, the matrix $\sum_{i\in\sA_j}\vM_i$ has rank at least two, and hence its largest eigenvalue is strictly smaller than its trace. Therefore,
\begin{align}
    \sum_{j \in [m]}
    \lambda_{\max}\!\bigg(\sum_{i \in \sA_j}\vM_i\bigg)
    <
    2\sum_{i \in [n]}\mu'_i.
\end{align}
Thus the leading coefficient of the loss is strictly positive. Consequently,
\begin{align}
    \loss_{\mathrm{rec}}(\hat{\vW};\, \vPhi)
    +
    \loss_{\mathrm{rec}}(\hat{\vW};\, \vPsi)
    > 0
\end{align}
for all sufficiently large $s<1$.
\end{proof}

\thmgroup*
\begin{proof}
Since $z \geq 0$, we have $\sigma(\vW^\top\vphi z) = z\,\sigma(\vW^\top\vphi)$ and
$\sigma(\vW^\top\vpsi z) = z\,\sigma(\vW^\top\vpsi)$. Hence
\begin{align}
    &\loss_{\mathrm{rec}}(\vW;\vphi) + \loss_{\mathrm{rec}}(\vW;\vpsi)
    =
    \E[z^2]\Big(
        \|\vphi - \vW\sigma(\vW^\top\vphi)\|^2
        +
        \|\vpsi - \vW\sigma(\vW^\top\vpsi)\|^2
    \Big),
    \\
    &\E_z\!\Bigg[\sum_{j \in [m]}\sqrt{[\sigma(\vW^\top\vphi z)]_j^2
    + [\sigma(\vW^\top\vpsi z)]_j^2}\Bigg]
    =
    \E[z]\sum_{j \in [m]}
    \sqrt{[\sigma(\vW^\top\vphi)]_j^2 + [\sigma(\vW^\top\vpsi)]_j^2}.
\end{align}
Setting $\vu := \sigma(\vW^\top\vphi)$, $\vv := \sigma(\vW^\top\vpsi)$,
and $\tilde\lambda := \lambda\,\E[z]/\E[z^2]$, dividing through by $\E[z^2]$ reduces the
problem to minimizing
\begin{align}
    \mathcal{J}(\vW)
    :=
    \|\vphi - \vW\vu\|^2 + \|\vpsi - \vW\vv\|^2
    + \tilde\lambda\sum_{j \in [m]}\sqrt{[\vu]_j^2 + [\vv]_j^2}.
\end{align}

If $\vphi$ and $\vpsi$ activate different columns $i \neq j$, setting $[\vW]_{[:,i]} = \vphi$ and $[\vW]_{[:,j]} = \vpsi$ gives zero reconstruction error and group penalty $2$. Hence,
\begin{align}
    L_{\mathrm{sep}} := \mathcal{J}(\vW) = 2\tilde\lambda.
\end{align}

If instead both activate the same unit column $\vw$,
writing $a := \vw^\top\vphi \geq 0$ and $b := \vw^\top\vpsi \geq 0$,
\begin{align}
    \mathcal{J}([\vw]) = 2 - (a^2 + b^2) + \tilde\lambda\sqrt{a^2 + b^2}.
\end{align}
With $\vM := \vphi\vphi^\top + \vpsi\vpsi^\top$, we have $a^2 + b^2 = \vw^\top\vM\vw$. Then the top eigenvector of $\vM$ is
\begin{align}
    \hat{\vw} := \frac{\vphi+\vpsi}{\|\vphi+\vpsi\|},
    \qquad
    a = b = \sqrt{\tfrac{1+\rho}{2}},
\end{align}
with eigenvalue $1+\rho$. By the Rayleigh quotient, the feasible range of
$a^2 + b^2$ over unit $\vw$ satisfies $a^2 + b^2 \leq 1+\rho$, with equality at $\hat{\vw}$.
Since $\mathcal{J}([\vw])$ depends on $\vw$ through $r := \sqrt{a^2+b^2}$, define
\begin{align}
    f(r) := 2 - r^2 + \tilde\lambda r,
    \qquad r \in [0,\sqrt{1+\rho}].
\end{align}
Since $f$ is concave in $r$, its minimum over $[0,\sqrt{1+\rho}]$ is attained at an endpoint.
Evaluating at $r=0$ and $r=\sqrt{1+\rho}$ gives
\begin{align}
    f(0) = 2,
    \qquad
    f(\sqrt{1+\rho}) = (1-\rho) + \tilde\lambda\sqrt{1+\rho}.
\end{align}
Thus
\begin{align}
    L_{\mathrm{sh}} := \min_{\vw}\mathcal{J}([\vw])
    = \min\big\{2,\; (1-\rho) + \tilde\lambda\sqrt{1+\rho}\big\}.
\end{align}

Finally, consider the inactive solution where
$\sigma(\vW^\top\vphi) = \vzero$ and $\sigma(\vW^\top\vpsi) = \vzero$. In this case,
\begin{align}
    L_{\mathrm{dead}} := \mathcal{J}(\vW) = \|\vphi\|^2 + \|\vpsi\|^2 = 2.
\end{align}

We now compare the three values:
\begin{align}
L_{\mathrm{sep}} = 2\tilde\lambda,
\quad
L_{\mathrm{sh}} = \min\big\{2,\; (1-\rho)+\tilde\lambda\sqrt{1+\rho}\big\},
\quad
L_{\mathrm{dead}} = 2.
\end{align}
If $\tilde\lambda < \tfrac{1-\rho}{2-\sqrt{1+\rho}}$, then
$2\tilde\lambda < (1-\rho) + \tilde\lambda\sqrt{1+\rho}$ and $2\tilde\lambda < 2$;
hence $L_{\mathrm{sep}} < L_{\mathrm{sh}}$ and $L_{\mathrm{sep}} < L_{\mathrm{dead}}$.
If $\tfrac{1-\rho}{2-\sqrt{1+\rho}} < \tilde\lambda < \sqrt{1+\rho}$, then
$(1-\rho) + \tilde\lambda\sqrt{1+\rho} < 2\tilde\lambda$ and
$(1-\rho) + \tilde\lambda\sqrt{1+\rho} < 2$;
hence $L_{\mathrm{sh}} < L_{\mathrm{sep}}$ and $L_{\mathrm{sh}} < L_{\mathrm{dead}}$.
If $\tilde\lambda > \sqrt{1+\rho}$, then $2 < 2\tilde\lambda$ and
$2 < (1-\rho) + \tilde\lambda\sqrt{1+\rho}$;
hence $L_{\mathrm{dead}} < L_{\mathrm{sep}}$ and $L_{\mathrm{dead}} < L_{\mathrm{sh}}$.

Substituting $\tilde\lambda = \lambda\,\E[z]/\E[z^2]$ yields the result.
When the shared solution is optimal, the reconstruction errors are
\begin{align}
    \loss_{\mathrm{rec}}(\hat{\vW};\vphi)
    =
    \E[z^2](1-a^2)
    =
    \tfrac{1-\rho}{2}\,\E[z^2],
    \qquad
    \loss_{\mathrm{rec}}(\hat{\vW};\vpsi)
    =
    \E[z^2](1-b^2)
    =
    \tfrac{1-\rho}{2}\,\E[z^2],
\end{align}
and when the inactive solution is optimal,
\begin{align}
    \loss_{\mathrm{rec}}(\hat{\vW};\vphi)
    =
    \loss_{\mathrm{rec}}(\hat{\vW};\vpsi)
    =
    \E[z^2].
\end{align}
\end{proof}

\thmalign*
\begin{proof}
Since $z \geq 0$, we have $\sigma(\vW^\top\vphi z) = z\,\sigma(\vW^\top\vphi)$ and
$\sigma(\vW^\top\vpsi z) = z\,\sigma(\vW^\top\vpsi)$. Hence
\begin{align}
    \loss_{\mathrm{rec}}(\vW;\vphi) + \loss_{\mathrm{rec}}(\vW;\vpsi)
    &=
    \E[z^2]\Big(
        \|\vphi - \vW\sigma(\vW^\top\vphi)\|^2
        +
        \|\vpsi - \vW\sigma(\vW^\top\vpsi)\|^2
    \Big),
    \\
    \E_z\!\Big[\sigma(\vW^\top\vphi z)^\top\sigma(\vW^\top\vpsi z)\Big]
    &=
    \E[z^2]\,\sigma(\vW^\top\vphi)^\top\sigma(\vW^\top\vpsi).
\end{align}
Setting $\vu := \sigma(\vW^\top\vphi)$ and $\vv := \sigma(\vW^\top\vpsi)$,
the loss factors as $\E[z^2]\,\mathcal{J}(\vW)$, where
\begin{align}
    \mathcal{J}(\vW) := \|\vphi - \vW\vu\|^2 + \|\vpsi - \vW\vv\|^2 - \lambda\,\vu^\top\vv.
\end{align}

If $\vphi$ and $\vpsi$ activate different columns $j_a \neq j_b$,
setting $[\vW]_{[:,j_a]} = \vphi$ and $[\vW]_{[:,j_b]} = \vpsi$ gives zero reconstruction error
and $\vu^\top\vv = 0$; hence
\begin{align}
    L_{\mathrm{sep}} := \mathcal{J}(\vW) = 0.
\end{align}

If instead both activate the same unit column $\vw$,
writing $a := \vw^\top\vphi \geq 0$ and $b := \vw^\top\vpsi \geq 0$,
\begin{align}
    \mathcal{J}([\vw])
    = 2 - a^2 - b^2 - \lambda\,ab
    = 2 - \vw^\top\vM_\lambda\vw,
\end{align}
where $\vM_\lambda := \vphi\vphi^\top + \vpsi\vpsi^\top + \tfrac{\lambda}{2}(\vphi\vpsi^\top + \vpsi\vphi^\top)$.
Under $\rho \in (0,1)$ and $\lambda \geq 0$, the top eigenvector is
\begin{align}
    \hat{\vw} := \frac{\vphi+\vpsi}{\|\vphi+\vpsi\|},
    \qquad
    a = b = \sqrt{\tfrac{1+\rho}{2}}.
\end{align}
By the Rayleigh quotient, $\vw^\top\vM_\lambda\vw \leq (1+\rho)(1+\tfrac{\lambda}{2})$ for unit $\vw$,
with equality at $\hat{\vw}$. Therefore
\begin{align}
    L_{\mathrm{sh}} := \min_{\vw}\mathcal{J}([\vw])
    = 2 - (1+\rho)(1 + \tfrac{\lambda}{2})
    = (1-\rho) - \lambda\,\tfrac{1+\rho}{2}.
\end{align}

Finally, consider the inactive solution where
$\sigma(\vW^\top\vphi) = \vzero$ and $\sigma(\vW^\top\vpsi) = \vzero$. In this case,
\begin{align}
    L_{\mathrm{dead}} := \mathcal{J}(\vW) = \|\vphi\|^2 + \|\vpsi\|^2 = 2.
\end{align}

We now compare the three values:
\[
L_{\mathrm{sep}} = 0,
\quad
L_{\mathrm{sh}} = (1-\rho) - \lambda\,\tfrac{1+\rho}{2},
\quad
L_{\mathrm{dead}} = 2.
\]
Since $\lambda \geq 0$, we have $L_{\mathrm{dead}} = 2 > 0 = L_{\mathrm{sep}}$, so the dead solution
is never optimal. Comparing the remaining two, $L_{\mathrm{sh}} < L_{\mathrm{sep}}$ if and only if
$\lambda > \tfrac{2(1-\rho)}{1+\rho} = \lambda^\star(\rho)$.

When $\lambda > \lambda^\star(\rho)$, the reconstruction errors at the shared minimizer are
\begin{align}
    \loss_{\mathrm{rec}}(\hat{\vW};\vphi)
    =
    \E[z^2](1-a^2)
    =
    \tfrac{1-\rho}{2}\,\E[z^2],
    \qquad
    \loss_{\mathrm{rec}}(\hat{\vW};\vpsi)
    =
    \E[z^2](1-b^2)
    =
    \tfrac{1-\rho}{2}\,\E[z^2].
\end{align}
\end{proof}

\section{Experimental Details}
\label{sec:exp_details}

This section provides details for the empirical analyses in Sections~\ref{sec:heterogeneity:definition} and~\ref{sec:experiments}. The code is available at \url{https://github.com/JiH00nKw0n/cross_modal_feature_heterogeneity}, and all experiments are conducted on a single NVIDIA A100 GPU.

\subsection{Experimental Details for Measuring Cross-Modal Feature Heterogeneity}
\label{sec:exp:detail:sec3}

This section provides experimental details for Figure~\ref{fig:multi_density} in Section~\ref{sec:heterogeneity:definition}, which reports the distribution of cosine distances between estimated image and text feature directions grouped by coactivation correlation.
Recall that we use decoder columns $\hat{\vphi}_i$ and $\hat{\vpsi}_j$ of modality-specific SAEs as estimates of image and text feature directions, and that the coactivation correlation $c_{i,j} := [\vC]_{i,j}$ between the $i$-th image latent and the $j$-th text latent on paired embeddings~\eqref{eq:correlation} serves as a proxy for semantic correspondence.
For each pair $(i,j)$, we measure the cosine distance between $\hat{\vphi}_i$ and $\hat{\vpsi}_j$.

We group all $m^2$ index pairs $(i,j)$\footnote{\label{fnm:1}We exclude dead latent coordinates, which are zero across all inputs, so fewer than $m^2$ pairs remain in practice.} by their correlation value $c_{i,j}$ and examine the distribution of cosine distances between the corresponding feature vectors.
A larger $c_{i,j}$ indicates that the corresponding image and text features are more likely to represent the same shared concept. Under perfect cross-modal feature alignment, their cosine distances would concentrate near zero.
We train modality-specific SAEs with latent dimension $m = 8192$, using the Top-$K$~\citep{makhzani2013k} activation function with $K=8$.
We follow the training protocol of~\citet{papadimitriou2025interpreting}.
Specifically, we use the AdamW optimizer with learning rate $5{\times}10^{-4}$ and weight decay $10^{-5}$, batch size $1024$, and a cosine schedule with $5\%$ linear warmup over $30$ training epochs.

\subsection{Implementation Details for Synthetic Data Experiments}
\label{app:synth_detail}

This section provides the data-generation specification, SAE architectures, and optimization hyperparameters used in Section~\ref{subsec:synthetic_exp}.
We repeat each experiment over three runs, reporting the average.

\paragraph{Dataset.}
We generate each pair of synthetic embeddings, following prior work~\citep{goodfellow2012large, sheikh2014truncated}.
We first sample a sparse latent code $\vz$ from a Bernoulli–Exponential process, and then form $(\vx,\vy)$ from the ground-truth feature matrices $(\vPhi,\vPsi)$ via~\eqref{eq:embeddings}.
The embedding dimension is $d = 256$, and the feature matrices $\vPhi, \vPsi \in \mathbb{R}^{d \times n}$ contain $n = n_S + n_I + n_T = 2048$ columns in total, where $n_S = 1024$ is the number of shared concepts present in both modalities, $n_I = 512$ is the number of image-only concepts (with $\vpsi_i = \mathbf{0}$), and $n_T = 512$ is the number of text-only concepts (with $\vphi_i = \mathbf{0}$).
For each shared concept $i \in [n_S]$, we sample $\vphi_i$ and $\vpsi_i$ as unit-norm vectors satisfying $\cos(\vphi_i, \vpsi_i) = \alpha$ for a prescribed cross-modal feature alignment $\alpha$.

All feature directions are unit norm.
Apart from the prescribed alignment $\cos(\vphi_i, \vpsi_i) = \alpha$ between matched shared concepts, every pair of directions belonging to distinct concepts $i \neq j$ has bounded pairwise interference, $|\cos(\vphi_i, \vphi_j)|,\, |\cos(\vpsi_i, \vpsi_j)|,\, |\cos(\vphi_i, \vpsi_j)| \le 0.30$.
Each coordinate of the latent code $\vz \in \mathbb{R}_+^n$ is independently set to zero with probability $s = 0.99$, and otherwise drawn i.i.d.\ from $\mathrm{Exp}(\beta)$ with rate $\beta = 1$.
The paired embeddings are then formed as $\vx = \vPhi \vz + \boldsymbol\epsilon_I$ and $\vy = \vPsi \vz + \boldsymbol\epsilon_T$ via~\eqref{eq:embeddings}, with independent observation noise $\boldsymbol\epsilon_I, \boldsymbol\epsilon_T \sim \mathcal{N}(\boldsymbol{0}, \sigma_{\mathrm{obs}}^2 \mathbf{I})$ of standard deviation $\sigma_{\mathrm{obs}} = 0.05$.
We generate $50{,}000$ paired embeddings for training and $10{,}000$ for evaluation.

\paragraph{Models.}
The shared SAE, which uses a single SAE across both modalities (shown in Figures~\ref{fig:synthetic_specific} and~\ref{fig:synthetic_compare_baselines}), has latent dimension $m = 8192$ and uses the Top-$K$ activation function~\citep{makhzani2013k} with $K = 16$.
Each modality-specific SAE, which uses a separate SAE for each modality, uses latent dimension $m = 4096$, so that the total number of learnable parameters matches that of the shared SAE.
The two baselines (Iso-Energy Alignment~\citep{dhimoila2026crossmodal} and Group-Sparse~\citep{kaushik2026learning}), shown in Figure~\ref{fig:synthetic_compare_baselines}, augment the shared SAE by training with an auxiliary loss.

\paragraph{Training.}
All SAEs are trained with AdamW ($\beta_1 = 0.9$, $\beta_2 = 0.999$) at a constant learning rate of $5 \times 10^{-4}$ with no warmup.
We use a batch size of $256$ and train for $10$ epochs.

\subsection{Evaluation Metrics for Synthetic Data Experiments}
\label{app:synth_metrics}

We provide formal definitions of the five metrics introduced in Section~\ref{subsec:synthetic_exp}.
We assess each method using these metrics: (i) and (ii) measure how well the SAE reconstructs embeddings and recovers features, (iii) and (iv) measure how well it aligns latent codes across modalities, and (v) measures whether cross-modal features collapse into a single feature.
\begin{enumerate}[label=(\roman*)]
    \item \emph{Reconstruction Error.} We measure the mean squared error (MSE) between input embeddings $\vx$ (or $\vy$) and their reconstructions $\tilde{\vx}$ (or $\tilde{\vy}$) on an evaluation set, averaged over both modalities.

    \item \emph{Feature Recovery Error.} Similar to Reconstruction Error, but using the feature direction vectors $\vphi_i$ (or $\vpsi_i$) as inputs, we measure their reconstruction MSE averaged over both modalities.

    \item \emph{Latent Alignment Error.} We measure the cosine distance between the image and text latent codes $\tilde{\vz}_{\mathrm I}(\vx)$ and $\tilde{\vz}_{\mathrm T}(\vy)$ obtained from paired embeddings $(\vx, \vy)$ on an evaluation set.

    \item \emph{Feature Alignment Error.} Similar to Alignment Error, but using features of the same concept $(\vphi_i,\vpsi_i)$ as inputs, we measure the cosine distance between the latent codes $\tilde{\vz}_{\mathrm I}(\vphi_i)$ and $\tilde{\vz}_{\mathrm T}(\vpsi_i)$.

    \item \emph{(Cross-Modal) Feature Collapse Rate.} We measure the fraction of feature pairs $(\vphi_i,\vpsi_i)$ that represent the same concept across modalities and are assigned to the same latent coordinate, indicating whether the SAE merges two modality-specific features into a single learned feature.
\end{enumerate}
Note that metrics (ii), (iv), and (v) have to know the ground-truth features $(\vPhi,\vPsi)$ and thus apply only in the synthetic setting. The mathematical definitions are given below.
\begin{enumerate}[label=(\roman*)]
    \item \emph{Reconstruction Error:}
    \begin{equation}
        \frac{1}{2}\;
        \E
        \bigg[
        \norm{\vx-\tilde\vx}_2^2 + \norm{\vy-\tilde\vy}_2^2
        \bigg]
        =
        \frac{1}{2}\;
        \E
        \bigg[
        \norm{\vx- \vW\sigma\big(\vW^\top \vx\big)}_2^2 + \norm{\vy-\vW\sigma\big(\vW^\top \vy\big)}_2^2
        \bigg]
        .
    \end{equation}

    \item \emph{Feature Recovery Error:}
\begin{equation}
    \frac{1}{2n_{\mathrm S}}\sum_{i\in[n_{\mathrm S}]}
    \bigg(
    \norm{\vpsi_i- \vW\sigma\big(\vW^\top \vpsi_i\big)}_2^2 + \norm{\vphi_i-\vW\sigma\big(\vW^\top \vphi_i\big)}_2^2
    \bigg)
    .
\end{equation}

    \item \emph{Latent Alignment Error:}
\begin{equation}
    1-
    \E
    \left[
    \cos\big(\tilde{\vz}(\vx),\tilde{\vz}(\vy)\big)
    \right]
    =
    1-
    \E
    \left[
    \cos\big(\sigma(\vW^{\!\top}\vx),\sigma(\vW^{\!\top}\vy)\big)
    \right].
\end{equation}

    \item \emph{Feature Alignment Error:}
\begin{equation}
    1-
    \frac{1}{n_{\mathrm S}}\sum_{i\in[n_{\mathrm S}]}
    \cos\!\big(\sigma(\vW^{\!\top}\vphi_i),\;\sigma(\vW^{\!\top}\vpsi_i)\big).
\end{equation}

    \item \emph{(Cross-Modal) Feature Collapse Rate:}
\begin{equation}
    \frac{1}{n_{\mathrm S}}\sum_{i\in[n_{\mathrm S}]}
    \mathbf{1}\!\left[\,
    \argmax_{j\in[m]}
    \cos\!\big(\vw_{j},\, \vphi_i \big)
     \;=\;
     \argmax_{j\in[m]}
    \cos\!\big(\vw_{j},\, \vpsi_i \big)
     \,
    \right]
    \;\in\;[0,1].
\end{equation}
\end{enumerate}

\subsection{Implementation Details for Real-World Experiments}
\label{app:real_detail}

\paragraph{Implementation Details.}
We use the \texttt{CC-3M}~\citep{sharma2018conceptual} dataset having approximately 2.82M image-caption pairs after dropping invalid URLs.
We optimize all SAEs with AdamW~($\beta_1 = 0.9$, $\beta_2 = 0.999$) at learning rate $5\times 10^{-4}$, weight decay $10^{-5}$, batch size $1024$, gradient clipping at norm $1.0$, and a cosine schedule with $5\%$ linear warmup followed by cosine decay to zero.
We use latent dimension $m=8192$ and Top-$K$ activation~\citep{makhzani2013k} with $K=32$, and we adopt the training protocol of~\citet{papadimitriou2025interpreting}. 

\paragraph{Licenses of datasets.}
The \texttt{CC-3M} dataset is released by Google for free use under the terms of its repository license. The \texttt{MS-COCO} dataset is released under \texttt{CC BY 4.0}.

\section{Additional Experimental Results}
\label{app:extended_experiments}

\subsection{Evidence for Cross-Modal Feature Heterogeneity on Embeddings of Larger VLMs}
\label{app:heterogeneity_large_backbones}

To check whether the observed cross-modal feature heterogeneity persists at larger scales, we repeat the same analysis (shown in Figure~\ref{fig:multi_density}) on the corresponding large-size models (ViT-L/14) of \texttt{CLIP}, \texttt{MetaCLIP}, \texttt{OpenCLIP}, and \texttt{SigLIP2}, and report the results in Figure~\ref{fig:multi_density_large} (Figure~\ref{fig:multi_density} in the main body shows results for the base-size models from the four VLM families).
The overall trend is consistent with Figure~\ref{fig:multi_density}: feature pairs with larger coactivation correlation tend to have smaller cosine distances, but they do not concentrate near zero.
This indicates that cross-modal feature heterogeneity persists across embedding models of different sizes.

\input{figures/multi_density_large}

\subsection{Real-World Results Across Experimental Configurations}
\label{app:additional_real_results}

In Section~\ref{subsec:real_exp}, Table~\ref{tab:real_sae_compact} reports results for a single VLM embedding and activation setup (\texttt{CLIP} ViT-B/32 embeddings with Top-$K$ activation at $K=32$).
To verify whether the conclusions hold across diverse setups, we conduct extended experiments along three orthogonal aspects: (i) the sparsity level $K$ in Top-$K$ activation, (ii) the pre-trained VLM backbone (varying both architecture scale and pre-training corpus), and (iii) the choice of sparsifying activation function.
All other settings follow Section~\ref{subsec:real_exp}, and we report averages over three independent runs.
Across all setups, our method matches the lowest reconstruction error of modality-specific SAEs while delivering the strongest cross-modal retrieval performance, same with the conclusions of Table~\ref{tab:real_sae_compact}.

\paragraph{Robustness to sparsity level $K$.}
We first vary the Top-$K$ sparsity level on \texttt{CLIP} ViT-B/32 embeddings.
Specifically, we sweep $K \in \{16, 32, 64\}$ while keeping all other hyperparameters identical to the main experiment in Section~\ref{subsec:real_exp}; the $K=32$ result is reported in Table~\ref{tab:real_sae_compact}, and the remaining levels in Table~\ref{tab:ext_sparsity}.
Across all three levels, our method preserves the lowest reconstruction error inherited from modality-specific SAEs, and $K=32$ emerges as the best operating point for cross-modal retrieval, where our method peaks at Image-to-Text Recall@$1$ of $16.0$.
The gain over the strongest baseline is largest in the sparser regime: at $K=16$, our method more than doubles its performance in both retrieval directions, whereas at $K=64$ the auxiliary-loss baselines benefit from the larger active budget and narrow the gap -- though this reflects our own scores dropping from the $K=32$ peak rather than the baselines surpassing us, as our method still leads on Text-to-Image Recall and on top-$10$ retrieval in both directions.
Overall, $K=32$ is the optimal operating point for cross-modal retrieval, and at this level our method consistently outperforms all baselines.

\input{tables/additional_results/ext_sparsity}

\paragraph{Robustness across vision-language model backbones.}
We next vary the pre-trained VLM backbone while fixing $K=32$ with Top-$K$ activation.
Specifically, we span two architecture scales (ViT-B/32 and ViT-L/14) and two pre-training corpora (\texttt{CLIP}~\citep{radford21clip} and \texttt{OpenCLIP}~\citep{cherti2023reproducible}).
We keep all other hyperparameters identical to the main experiment in Section~\ref{subsec:real_exp}.
Specifically, for each backbone we scale the SAE width $m$ with the embedding dimension $d$ to hold the expansion ratio $m/d$ fixed, so that comparisons are not confounded by SAE capacity.
The results are reported in Table~\ref{tab:ext_backbone}, with the \texttt{CLIP} ViT-B/32 setting in Table~\ref{tab:real_sae_compact}.
Across all backbones, our method retains the lowest reconstruction error while attaining the strongest cross-modal retrieval, confirming that the effect is not specific to a single architecture or pre-training corpus.
Therefore, these results show that the advantage of our method holds across different VLM backbones.

\input{tables/additional_results/ext_backbone}

\paragraph{Robustness to activation function.}
We replace the sparsifying activation function, changing from Top-$K$ to BatchTop-$K$~\citep{bussmann2024batchtopk}, which selects the top-$K$ activations across the entire batch rather than per sample.
We keep all other hyperparameters identical to the main experiment in Section~\ref{subsec:real_exp}.
The results are reported in Table~\ref{tab:ext_activation}.
Our method again attains the lowest reconstruction error and the strongest cross-modal retrieval.
This confirms that the advantage of our method is robust to the choice of sparsifying activation function.

\input{tables/additional_results/ext_activation}

%% file: figures/multi_density_large.tex
\begin{figure*}[h!]
  \centering
  \includegraphics[width=1.0\textwidth]{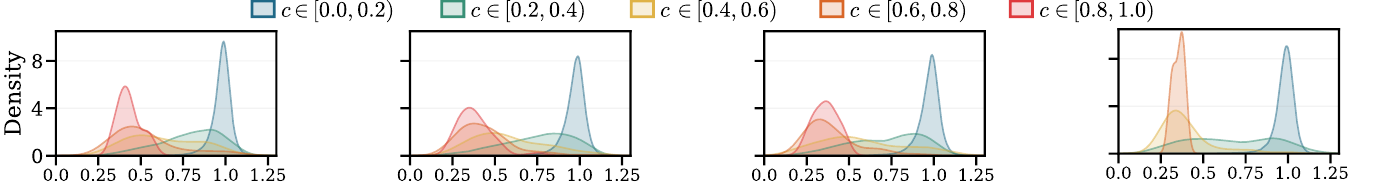}
  \resizebox{\textwidth}{!}{%
    \begin{tabular}{*{4}{p{0.22\textwidth}}}
        (a) \makecell{\texttt{CLIP} ViT-L/14} &
        (b) \makecell{\texttt{MetaCLIP} L/14} &
        (c) \makecell{\texttt{OpenCLIP} L/14} &
        (d) \makecell{\texttt{SigLIP} Large}
    \end{tabular}%
}
  \caption{
    Distribution of cosine distances between image--text feature pairs estimated from embeddings of four VLMs.
    This figure presents the same analysis as Figure~\ref{fig:multi_density}, extended to larger models.
  }
  \label{fig:multi_density_large}
\end{figure*}

%% file: tables/additional_results/ext_sparsity.tex
\begin{table}[t]
    \centering
    \caption{Effect of Top-$K$ sparsity level $K$ on CLIP ViT-B/32 embeddings. Each section corresponds to a different value of $K$.}
    \label{tab:ext_sparsity}
    {
    \setlength{\tabcolsep}{2pt}
    \resizebox{\linewidth}{!}{
    \begin{tabular}{l c ccc ccc c c}
        \toprule
        & \multicolumn{7}{c}{MS-COCO~\citep{lin2014microsoft}} & \multicolumn{2}{c}{ImageNet1K~\citep{deng2009imagenet}} \\
        \cmidrule(lr){2-8} \cmidrule(lr){9-10}
        & Recon. ($\downarrow$)
        & \multicolumn{3}{c}{Image-to-Text ($\uparrow$)}
        & \multicolumn{3}{c}{Text-to-Image ($\uparrow$)}
        & Recon. ($\downarrow$) & Zero-shot ($\uparrow$) \\
        \cmidrule(lr){2-2} \cmidrule(lr){3-5} \cmidrule(lr){6-8} \cmidrule(lr){9-9} \cmidrule(lr){10-10}
        Methods & MSE
        & R@1 & R@5 & R@10
        & R@1 & R@5 & R@10
        & MSE & Accuracy \\
        \midrule
        \multicolumn{10}{c}{\textit{$K = 16$}} \\
        \midrule
        Shared SAE
        & 0.131 
        & 3.0 {\tiny $(\pm 1.9)$} & 7.7 {\tiny $(\pm 3.3)$} & 10.9 {\tiny $(\pm 4.1)$}
        & 3.0 {\tiny $(\pm 0.3)$} & 7.8 {\tiny $(\pm 0.7)$} & 11.3 {\tiny $(\pm 1.1)$}
        & 0.169  & 12.6 {\tiny $(\pm 4.1)$} \\
        + Iso-Energy alignment loss
        & 0.131
        & 4.2 {\tiny $(\pm 1.2)$} & 9.6 {\tiny $(\pm 2.8)$} & 13.6 {\tiny $(\pm 3.8)$}
        & \underline{3.1} {\tiny $(\pm 0.2)$} & 8.2 {\tiny $(\pm 0.8)$} & 12.1 {\tiny $(\pm 1.3)$}
        & 0.168  & 15.3 {\tiny $(\pm 3.8)$} \\
        + Group-sparse loss
        & 0.134 
        & \underline{5.1} {\tiny $(\pm 0.1)$} & \underline{12.2} {\tiny $(\pm 0.2)$} & \underline{17.2} {\tiny $(\pm 0.2)$}
        & 2.9 {\tiny $(\pm 0.1)$} & \underline{8.8} {\tiny $(\pm 0.5)$} & \underline{12.9} {\tiny $(\pm 0.5)$}
        & 0.170  & \underline{20.3} {\tiny $(\pm 0.6)$} \\
        Modality-Specific SAEs
        & \textbf{0.127} 
        & 0.0 {\tiny $(\pm 0.0)$} & 0.1 {\tiny $(\pm 0.1)$} & 0.2 {\tiny $(\pm 0.1)$}
        & 0.0 {\tiny $(\pm 0.0)$} & 0.1 {\tiny $(\pm 0.1)$} & 0.2 {\tiny $(\pm 0.1)$}
        & \textbf{0.165}  & 0.1 {\tiny $(\pm 0.1)$} \\
        \rowcolor{gray!20}
        + Post-hoc Alignment (Ours)
        & \textbf{0.127} 
        & \textbf{13.3} {\tiny $(\pm 0.7)$} & \textbf{29.4} {\tiny $(\pm 0.3)$} & \textbf{38.9} {\tiny $(\pm 0.7)$}
        & \textbf{9.6} {\tiny $(\pm 0.3)$} & \textbf{23.0} {\tiny $(\pm 0.5)$} & \textbf{31.8} {\tiny $(\pm 0.5)$}
        & \textbf{0.165}  & \textbf{22.9} {\tiny $(\pm 0.5)$} \\
        \midrule
        \multicolumn{10}{c}{\textit{$K = 64$}} \\
        \midrule
        Shared SAE
        & 0.063 
        & 9.8 {\tiny $(\pm 1.3)$} & 21.2 {\tiny $(\pm 2.0)$} & 28.0 {\tiny $(\pm 2.9)$}
        & 6.0 {\tiny $(\pm 0.6)$} & 15.2 {\tiny $(\pm 1.2)$} & 21.3 {\tiny $(\pm 1.5)$}
        & \textbf{0.083}  & 15.0 {\tiny $(\pm 1.5)$} \\
        + Iso-Energy alignment loss
        & 0.063 
        & 9.0 {\tiny $(\pm 0.8)$} & 19.9 {\tiny $(\pm 0.4)$} & 26.7 {\tiny $(\pm 0.5)$}
        & 5.3 {\tiny $(\pm 0.6)$} & 13.4 {\tiny $(\pm 1.5)$} & 18.8 {\tiny $(\pm 2.0)$}
        & 0.084  & 16.3 {\tiny $(\pm 1.2)$} \\
        + Group-sparse loss
        & 0.091 
        & \textbf{11.8} {\tiny $(\pm 0.5)$} & \textbf{26.0} {\tiny $(\pm 0.8)$} & \underline{34.6} {\tiny $(\pm 1.1)$}
        & \underline{7.0} {\tiny $(\pm 0.3)$} & \underline{18.4} {\tiny $(\pm 0.5)$} & \underline{26.4} {\tiny $(\pm 0.5)$}
        & 0.116  & \textbf{33.0} {\tiny $(\pm 0.4)$} \\
        Modality-Specific SAEs
        & \textbf{0.062} 
        & 0.0 {\tiny $(\pm 0.0)$} & 0.1 {\tiny $(\pm 0.0)$} & 0.2 {\tiny $(\pm 0.1)$}
        & 0.0 {\tiny $(\pm 0.0)$} & 0.1 {\tiny $(\pm 0.0)$} & 0.3 {\tiny $(\pm 0.1)$}
        & \textbf{0.083}  & 0.1 {\tiny $(\pm 0.0)$} \\
        \rowcolor{gray!20}
        + Post-hoc Alignment (Ours)
        & \textbf{0.062} 
        & \underline{11.2} {\tiny $(\pm 0.8)$} & \underline{26.0} {\tiny $(\pm 1.6)$} & \textbf{34.9} {\tiny $(\pm 2.2)$}
        & \textbf{8.5} {\tiny $(\pm 0.8)$} & \textbf{21.5} {\tiny $(\pm 2.1)$} & \textbf{29.7} {\tiny $(\pm 2.8)$}
        & \textbf{0.083}  & \underline{19.3} {\tiny $(\pm 1.2)$} \\
        \bottomrule
    \end{tabular}
    }
    }
\end{table}

%% file: tables/additional_results/ext_backbone.tex
\begin{table}[t]
    \centering
    \caption{Effect of the size and type of the pre-trained VLM backbone.
    Each section corresponds to a different backbone, with the latent dimension of SAE $m$ is shown in the header.}
    \label{tab:ext_backbone}
    {
    \setlength{\tabcolsep}{2pt}
    \resizebox{\linewidth}{!}{
    \begin{tabular}{l c ccc ccc c c}
        \toprule
        & \multicolumn{7}{c}{MS-COCO~\citep{lin2014microsoft}} & \multicolumn{2}{c}{ImageNet1K~\citep{deng2009imagenet}} \\
        \cmidrule(lr){2-8} \cmidrule(lr){9-10}
        & Recon. ($\downarrow$)
        & \multicolumn{3}{c}{Image-to-Text ($\uparrow$)}
        & \multicolumn{3}{c}{Text-to-Image ($\uparrow$)}
        & Recon. ($\downarrow$) & Zero-shot ($\uparrow$) \\
        \cmidrule(lr){2-2} \cmidrule(lr){3-5} \cmidrule(lr){6-8} \cmidrule(lr){9-9} \cmidrule(lr){10-10}
        Methods & MSE
        & R@1 & R@5 & R@10
        & R@1 & R@5 & R@10
        & MSE & Accuracy \\
        \midrule
        \multicolumn{10}{c}{\textit{CLIP ViT-L/14 ($m = 12288$)}} \\
        \midrule
        Shared SAE
        & {0.119} 
        & \underline{18.2} {\tiny $(\pm 0.2)$} & \underline{33.6} {\tiny $(\pm 0.4)$} & \underline{41.8} {\tiny $(\pm 0.7)$}
        & \underline{9.9} {\tiny $(\pm 0.8)$} & \underline{22.9} {\tiny $(\pm 0.9)$} & \underline{30.7} {\tiny $(\pm 1.2)$}
        & 0.155  & 38.9 {\tiny $(\pm 0.6)$} \\
        + Iso-Energy alignment loss
        & 0.119 
        & 16.8 {\tiny $(\pm 1.1)$} & 31.2 {\tiny $(\pm 1.2)$} & 39.3 {\tiny $(\pm 1.7)$}
        & 9.4 {\tiny $(\pm 0.5)$} & 21.3 {\tiny $(\pm 0.9)$} & 28.9 {\tiny $(\pm 1.2)$}
        & \textbf{0.154}  & \underline{39.4} {\tiny $(\pm 0.8)$} \\
        + Group-sparse loss
        & 0.137 
        & 14.2 {\tiny $(\pm 0.2)$} & 29.6 {\tiny $(\pm 0.7)$} & 38.5 {\tiny $(\pm 1.0)$}
        & 8.2 {\tiny $(\pm 0.4)$} & 20.3 {\tiny $(\pm 0.4)$} & 28.4 {\tiny $(\pm 0.3)$}
        & 0.176  & \textbf{45.0} {\tiny $(\pm 0.3)$} \\
        Modality-Specific SAEs
        & \textbf{0.117} 
        & 0.0 {\tiny $(\pm 0.0)$} & 0.1 {\tiny $(\pm 0.0)$} & 0.2 {\tiny $(\pm 0.0)$}
        & 0.0 {\tiny $(\pm 0.0)$} & 0.1 {\tiny $(\pm 0.0)$} & 0.2 {\tiny $(\pm 0.1)$}
        & \textbf{0.154}  & 0.1 {\tiny $(\pm 0.0)$} \\
        \rowcolor{gray!20}
        + Post-hoc Alignment (Ours)
        & \textbf{0.117} 
        & \textbf{22.6} {\tiny $(\pm 0.7)$} & \textbf{43.6} {\tiny $(\pm 1.2)$} & \textbf{54.9} {\tiny $(\pm 1.2)$}
        & \textbf{15.3} {\tiny $(\pm 0.4)$} & \textbf{33.0} {\tiny $(\pm 0.4)$} & \textbf{42.9} {\tiny $(\pm 0.3)$}
        & \textbf{0.154}  & 35.9 {\tiny $(\pm 1.1)$} \\
        \midrule
        \multicolumn{10}{c}{\textit{OpenCLIP ViT-B/32 ($m = 8192$)}} \\
        \midrule
        Shared SAE
        & {0.116} 
        & 8.9 {\tiny $(\pm 0.3)$} & 18.5 {\tiny $(\pm 0.3)$} & 24.4 {\tiny $(\pm 0.5)$}
        & 4.8 {\tiny $(\pm 0.2)$} & 12.2 {\tiny $(\pm 0.2)$} & 17.4 {\tiny $(\pm 0.2)$}
        & {0.149}  & 18.7 {\tiny $(\pm 0.7)$} \\
        + Iso-Energy alignment loss
        & 0.117 
        & 9.2 {\tiny $(\pm 0.8)$} & 18.6 {\tiny $(\pm 1.3)$} & 24.5 {\tiny $(\pm 1.2)$}
        & 4.8 {\tiny $(\pm 0.5)$} & 12.3 {\tiny $(\pm 0.8)$} & 17.4 {\tiny $(\pm 1.1)$}
        & 0.149  & 18.9 {\tiny $(\pm 1.3)$} \\
        + Group-sparse loss
        & 0.141 
        & \underline{10.1} {\tiny $(\pm 0.4)$} & \underline{23.1} {\tiny $(\pm 0.4)$} & \underline{31.8} {\tiny $(\pm 0.5)$}
        & \underline{6.1} {\tiny $(\pm 0.0)$} & \underline{17.1} {\tiny $(\pm 0.2)$} & \underline{25.4} {\tiny $(\pm 0.3)$}
        & 0.176  & \textbf{34.3} {\tiny $(\pm 0.4)$} \\
        Modality-Specific SAEs
        & \textbf{0.115} 
        & 0.0 {\tiny $(\pm 0.0)$} & 0.1 {\tiny $(\pm 0.0)$} & 0.2 {\tiny $(\pm 0.1)$}
        & 0.0 {\tiny $(\pm 0.0)$} & 0.1 {\tiny $(\pm 0.0)$} & 0.2 {\tiny $(\pm 0.1)$}
        & \textbf{0.149}  & 0.1 {\tiny $(\pm 0.0)$} \\
        \rowcolor{gray!20}
        + Post-hoc Alignment (Ours)
        & \textbf{0.115} 
        & \textbf{21.0} {\tiny $(\pm 0.9)$} & \textbf{41.2} {\tiny $(\pm 1.1)$} & \textbf{52.0} {\tiny $(\pm 1.0)$}
        & \textbf{11.2} {\tiny $(\pm 0.1)$} & \textbf{26.5} {\tiny $(\pm 0.6)$} & \textbf{36.1} {\tiny $(\pm 0.9)$}
        & \textbf{0.149}  & \underline{29.2} {\tiny $(\pm 0.7)$} \\
        \midrule
        \multicolumn{10}{c}{\textit{OpenCLIP ViT-L/14 ($m = 12288$)}} \\
        \midrule
        Shared SAE
        & {0.133} 
        & 26.5 {\tiny $(\pm 0.4)$} & 45.5 {\tiny $(\pm 1.8)$} & 54.5 {\tiny $(\pm 2.1)$}
        & 15.8 {\tiny $(\pm 0.8)$} & 32.0 {\tiny $(\pm 1.5)$} & 40.6 {\tiny $(\pm 1.7)$}
        & \textbf{0.164}  & \underline{50.9} {\tiny $(\pm 0.7)$} \\
        + Iso-Energy alignment loss
        & 0.134 
        & \underline{27.1} {\tiny $(\pm 0.4)$} & \underline{46.2} {\tiny $(\pm 1.1)$} & \underline{55.5} {\tiny $(\pm 0.7)$}
        & \underline{16.2} {\tiny $(\pm 0.5)$} & \underline{32.9} {\tiny $(\pm 0.9)$} & \underline{41.6} {\tiny $(\pm 0.8)$}
        & 0.166  & 50.3 {\tiny $(\pm 0.5)$} \\
        + Group-sparse loss
        & 0.162 
        & 23.7 {\tiny $(\pm 0.2)$} & 43.3 {\tiny $(\pm 0.5)$} & 53.9 {\tiny $(\pm 0.1)$}
        & 13.1 {\tiny $(\pm 0.3)$} & 29.1 {\tiny $(\pm 0.4)$} & 38.6 {\tiny $(\pm 0.5)$}
        & 0.196  & \textbf{53.9} {\tiny $(\pm 0.3)$} \\
        Modality-Specific SAEs
        & \textbf{0.129} 
        & 0.1 {\tiny $(\pm 0.0)$} & 0.1 {\tiny $(\pm 0.1)$} & 0.2 {\tiny $(\pm 0.1)$}
        & 0.0 {\tiny $(\pm 0.0)$} & 0.1 {\tiny $(\pm 0.1)$} & 0.2 {\tiny $(\pm 0.1)$}
        & \textbf{0.164}  & 0.1 {\tiny $(\pm 0.0)$} \\
        \rowcolor{gray!20}
        + Post-hoc Alignment (Ours)
        & \textbf{0.129} 
        & \textbf{27.4} {\tiny $(\pm 2.2)$} & \textbf{50.6} {\tiny $(\pm 1.8)$} & \textbf{62.0} {\tiny $(\pm 1.4)$}
        & \textbf{18.1} {\tiny $(\pm 2.0)$} & \textbf{37.6} {\tiny $(\pm 3.0)$} & \textbf{47.9} {\tiny $(\pm 3.1)$}
        & \textbf{0.164}  & 42.4 {\tiny $(\pm 0.4)$} \\
        \bottomrule
    \end{tabular}
    }
    }
\end{table}

%% file: tables/additional_results/ext_activation.tex
\begin{table}[t]
    \centering
    \caption{Effect of sparsifying activation function on CLIP ViT-B/32 ($L=8192$, $K=32$). The Top-$K$ setting is reported in Table~\ref{tab:real_sae_compact}; this table reports the BatchTop-$K$~\citep{bussmann2024batchtopk} variant.}
    \label{tab:ext_activation}
    {
    \setlength{\tabcolsep}{2pt}
    \resizebox{\linewidth}{!}{
    \begin{tabular}{l c ccc ccc c c}
        \toprule
        & \multicolumn{7}{c}{MS-COCO~\citep{lin2014microsoft}} & \multicolumn{2}{c}{ImageNet1K~\citep{deng2009imagenet}} \\
        \cmidrule(lr){2-8} \cmidrule(lr){9-10}
        & Recon. ($\downarrow$)
        & \multicolumn{3}{c}{Image-to-Text ($\uparrow$)}
        & \multicolumn{3}{c}{Text-to-Image ($\uparrow$)}
        & Recon. ($\downarrow$) & Zero-shot ($\uparrow$) \\
        \cmidrule(lr){2-2} \cmidrule(lr){3-5} \cmidrule(lr){6-8} \cmidrule(lr){9-9} \cmidrule(lr){10-10}
        Methods & MSE
        & R@1 & R@5 & R@10
        & R@1 & R@5 & R@10
        & MSE & Accuracy \\
        \midrule
        \multicolumn{10}{c}{\textit{BatchTop-$K$}} \\
        \midrule
        Shared SAE
        & 0.089 {\tiny }
        & 6.3 {\tiny $(\pm 0.6)$} & 15.9 {\tiny $(\pm 1.5)$} & 22.0 {\tiny $(\pm 1.7)$}
        & 3.9 {\tiny $(\pm 0.6)$} & 11.3 {\tiny $(\pm 0.9)$} & 16.8 {\tiny $(\pm 1.0)$}
        & 0.115 {\tiny } & 16.4 {\tiny $(\pm 3.1)$} \\
        + Iso-Energy alignment loss
        & 0.089 {\tiny }
        & 5.8 {\tiny $(\pm 0.8)$} & 14.8 {\tiny $(\pm 0.8)$} & 20.4 {\tiny $(\pm 0.9)$}
        & 3.3 {\tiny $(\pm 1.4)$} & 9.3 {\tiny $(\pm 3.3)$} & 13.9 {\tiny $(\pm 3.9)$}
        & 0.115 {\tiny } & 13.0 {\tiny $(\pm 1.8)$} \\
        + Group-sparse loss
        & 0.105 {\tiny }
        & \underline{7.8} {\tiny $(\pm 0.3)$} & \underline{19.0} {\tiny $(\pm 0.4)$} & \underline{26.2} {\tiny $(\pm 0.4)$}
        & \underline{4.9} {\tiny $(\pm 0.0)$} & \underline{13.7} {\tiny $(\pm 0.1)$} & \underline{20.6} {\tiny $(\pm 0.1)$}
        & 0.132 {\tiny } & \textbf{28.0} {\tiny $(\pm 0.7)$} \\
        Modality-Specific SAEs
        & \textbf{0.088} {\tiny }
        & 0.0 {\tiny $(\pm 0.0)$} & 0.0 {\tiny $(\pm 0.0)$} & 0.1 {\tiny $(\pm 0.0)$}
        & 0.0 {\tiny $(\pm 0.0)$} & 0.1 {\tiny $(\pm 0.0)$} & 0.2 {\tiny $(\pm 0.0)$}
        & \textbf{0.114} {\tiny } & 0.1 {\tiny $(\pm 0.0)$} \\
        \rowcolor{gray!20}
        + Post-hoc Alignment (Ours)
        & \textbf{0.088} {\tiny }
        & \textbf{16.2} {\tiny $(\pm 0.8)$} & \textbf{34.6} {\tiny $(\pm 1.0)$} & \textbf{44.7} {\tiny $(\pm 1.4)$}
        & \textbf{10.5} {\tiny $(\pm 0.6)$} & \textbf{25.5} {\tiny $(\pm 0.8)$} & \textbf{35.0} {\tiny $(\pm 0.9)$}
        & \textbf{0.114} {\tiny } & \underline{22.8} {\tiny $(\pm 1.0)$} \\
        \bottomrule
    \end{tabular}
    }
    }
\end{table}